\documentclass[journal]{IEEEtran}

\usepackage[american]{babel}
\usepackage{amsmath,amssymb,amsfonts,amsthm}
\usepackage{algorithmic}
\usepackage{algorithm}
\usepackage{graphicx}
\usepackage{booktabs}
\usepackage{xcolor}
\usepackage[hidelinks]{hyperref}
\usepackage{cleveref}
\usepackage{multirow}
\usepackage{microtype}
\usepackage{enumitem}
\usepackage{graphicx}
\usepackage{dblfloatfix}
\usepackage{mathtools}
\usepackage{textcomp}
\usepackage{cite}
\usepackage{thmtools}
\usepackage{lipsum}
\usepackage{amsmath,amssymb}
\usepackage[T1]{fontenc}
\usepackage{tikz}
\usepackage{amsmath}
\usepackage{graphicx}              
\usepackage{amsmath,amssymb}                     
\usepackage{amsmath}
\usepackage{caption}
\usetikzlibrary{positioning,arrows.meta,calc,fit,backgrounds,shapes.geometric}

\definecolor{ink}{HTML}{1F2A30}
\definecolor{gmid}{HTML}{6E7B82}
\definecolor{ghair}{HTML}{C2C9CD}
\definecolor{gfill}{HTML}{EDF0F1}
\definecolor{teal}{HTML}{2E6F6A}
\definecolor{tealf}{HTML}{D8E6E4}
\definecolor{terra}{HTML}{C2622F}
\definecolor{terraf}{HTML}{F1E0D5}
\usetikzlibrary{positioning,arrows.meta,calc,fit,backgrounds,shapes.geometric}
\definecolor{tealx}{HTML}{2A6F6B}\definecolor{inkx}{HTML}{20323A}
\definecolor{fillx}{HTML}{EEF4F3}\definecolor{anat}{HTML}{E4EEF0}
\definecolor{diag}{HTML}{AFCDCA}\definecolor{rustx}{HTML}{B0461E}
\definecolor{ttint}{HTML}{EAF3F2}\definecolor{rtint}{HTML}{FBEFE9}
\usetikzlibrary{positioning,arrows.meta,calc,fit,backgrounds}
\usepackage{xcolor}
\definecolor{tealx}{HTML}{2A6F6B}
\definecolor{inkx}{HTML}{20323A}
\definecolor{fillx}{HTML}{EEF4F3}
\definecolor{anat}{HTML}{E4EEF0}
\definecolor{diag}{HTML}{AFCDCA}
\definecolor{rustx}{HTML}{B0461E}
\usetikzlibrary{positioning,arrows.meta,shapes.geometric,calc,fit,backgrounds}
\usepackage{tcolorbox}
\tcbuselibrary{skins,breakable}

\usepackage[T1]{fontenc}
\usepackage{textcomp}
\usepackage{listings}
\usepackage{xcolor}
\usepackage{upquote}          
\usepackage{listings}
\usepackage{xcolor}
\usepackage{upquote}
\tcbuselibrary{skins,listings,breakable}
\definecolor{promptbg}{gray}{0.965}
\definecolor{promptframe}{gray}{0.78}
\definecolor{prompttitle}{RGB}{15,122,122}   
\definecolor{promptarrow}{gray}{0.50}

\lstdefinestyle{promptstyle}{
  basicstyle=\footnotesize\ttfamily,
  upquote=true,
  breaklines=true,
  breakatwhitespace=true,
  breakindent=1.4em,
  postbreak=\mbox{\textcolor{promptarrow}{$\hookrightarrow$}\space},
  columns=fullflexible,
  keepspaces=true,
  showstringspaces=false,
  aboveskip=0pt, belowskip=0pt,
}

\newtcblisting{promptbox}[2][]{
  listing only,
  listing options={style=promptstyle},
  enhanced, breakable=false,
  colback=promptbg, colframe=promptframe, boxrule=0.5pt, arc=2pt,
  coltitle=white, colbacktitle=prompttitle, fonttitle=\small\bfseries,
  title={#2}, toptitle=2pt, bottomtitle=2pt,
  left=5pt, right=4pt, top=3pt, bottom=3pt,
  #1
}

\definecolor{conftriage-blue}{HTML}{1F4E79}
\definecolor{conftriage-teal}{HTML}{0E8388}
\definecolor{conftriage-orange}{HTML}{D8730A}
\definecolor{conftriage-red}{HTML}{B23A48}
\definecolor{conftriage-gray}{HTML}{4A4A4A}
\definecolor{conftriage-lightblue}{HTML}{DEEBF7}
\definecolor{conftriage-lightgreen}{HTML}{E2F0CB}
\definecolor{conftriage-lightorange}{HTML}{FFE6CC}

\theoremstyle{plain}
\newtheorem{theorem}{Theorem}

\theoremstyle{definition}

\newtheorem{assumption}{Assumption}

\theoremstyle{remark}
\newtheorem{remark}{Remark}

\newcommand{\method}{ConfTriage}
\newcommand{\xfeat}{\mathbf{x}}
\newcommand{\attr}{\mathbf{a}}
\newcommand{\ctv}{\mathbf{v}}
\newcommand{\desc}{s}
\newcommand{\imgstats}{\mathbf{g}}
\newcommand{\labelvar}{y}
\newcommand{\Pdist}{P}

\newcommand{\Dcal}{\mathcal{D}}
\newcommand{\Dcalib}{\Dcal_{\mathrm{cal}}}
\newcommand{\Dtest}{\Dcal_{\mathrm{test}}}
\newcommand{\Xcal}{\mathcal{X}}

\newcommand{\LLM}{\mathcal{M}}
\newcommand{\specdl}{f_{\mathrm{DL}}}
\newcommand{\probhat}{\hat{p}}

\newcommand{\verbconf}{c_{v}}

\newcommand{\sigmoidf}{\sigma}
\newcommand{\thresh}{\tau}
\newcommand{\plattmap}{g_{\mathrm{Platt}}}
\newcommand{\phimap}{\phi}
\newcommand{\Real}{\mathbb{R}}
\newcommand{\indicator}{\mathbb{1}}

\newcommand{\Risk}{R}

\newcommand{\Sens}{\mathrm{Sens}}

\newcommand{\defind}{D}
\newcommand{\predY}{\hat{\labelvar}}

\newcommand{\recbadge}[1]{%
  \tcbox[on line, nobeforeafter, box align=base,
    boxsep=0pt, left=4pt, right=4pt, top=1pt, bottom=1pt,
    arc=2pt, boxrule=0pt,
    colback=conftriage-teal, colframe=conftriage-teal]{%
    \color{white}\bfseries\footnotesize #1}}

\newcommand{\rec}[3]{%
  \par\noindent\recbadge{#1}\enspace\textbf{#2}\par
  \vspace{2pt}%
  \begingroup\setlength{\leftskip}{1.5em}\noindent #3\par\endgroup}

\newcommand{\recsep}{\par\vspace{5pt}%
  \textcolor{conftriage-teal!35}{\rule{\linewidth}{0.4pt}}\par\vspace{5pt}}

\newcommand{\findbadge}[1]{%
  \tcbox[on line, nobeforeafter, box align=base,
    boxsep=0pt, left=4pt, right=4pt, top=1pt, bottom=1pt,
    arc=2pt, boxrule=0pt,
    colback=conftriage-blue, colframe=conftriage-blue]{%
    \color{white}\bfseries\footnotesize #1}}

\newcommand{\finding}[3]{%
  \par\noindent\findbadge{#1}\enspace\textbf{#2}\par
  \vspace{2pt}%
  \begingroup\setlength{\leftskip}{1.5em}\noindent #3\par\endgroup}

\newcommand{\findsep}{\par\vspace{5pt}%
  \textcolor{conftriage-blue!35}{\rule{\linewidth}{0.4pt}}\par\vspace{5pt}}

\begin{document}

\title{ConfTriage: A Calibration-Aware LLM Triage Framework for Pulmonary Nodule Malignancy with Selective Specialist Deferral}

\author{
\IEEEauthorblockN{Md Rabiul Islam$^{*}$, Samir Abdaljalil$^{*}$, Erchin Serpedin,~\IEEEmembership{Fellow,~IEEE}, and Hasan Kurban}
\thanks{$^{*}$M.R. Islam and S. Abdaljalil contributed equally to this work.}
\thanks{M.R. Islam, S. Abdaljalil, and E. Serpedin are with the Department of Electrical and Computer Engineering, Texas A\&M University, College Station, TX, USA. E\textendash mail: \{rabiul\_islam, sabdaljalil, eserpedin\}@tamu.edu.}
\thanks{H. Kurban is with the College of Science and Engineering, Hamad Bin Khalifa University, Doha, Qatar. E\textendash mail: hkurban@hbku.edu.qa.}
}

\maketitle

\begin{abstract}
Pulmonary nodule malignancy prediction typically depends on image-trained specialist deep learning (DL) models that require substantial annotated imaging data and task-specific training. We investigate whether a generalist large language model (LLM), reading only a faithful natural\textendash language rendering of standard nodule attributes, can serve as a calibrated triage layer. To address this question, we proceed in three steps. \textbf{(1) Framework:} We propose \method{}, a confidence\textendash calibrated method built on three pillars: language as the modality (the LLM consumes structured\textendash to\textendash language input rather than raw pixels), calibration as the safety mechanism (verbalized confidences are transformed in logit space and Platt\textendash scaled), and a selective specialist DL backstop for low\textendash confidence cases. \textbf{(2) Theory:} We prove two guarantees: a finite\textendash sample combined\textendash error bound that yields an explicit per\textendash threshold operational certificate (Theorem~\ref{thm:triage}); an oracle inequality showing that excess risk over the Bayes\textendash optimal deferral classifier is controlled by the $L^1$ calibration error of the LLM probability, formally tying calibration to triage optimality (Theorem~\ref{thm:oracle}). \textbf{(3) Evidence:} A controlled seven\textendash way input ablation across five frontier LLMs (Gemini\textendash 3.1\textendash Flash\textendash Lite, Mistral\textendash Large, Qwen\textendash 2.5\textendash 72B, Claude\textendash 3.5\textendash Haiku, GPT\textendash 4o\textendash mini) on LIDC\textendash IDRI shows that natural\textendash language descriptions dominate the diagnostic signal, that low\textendash level image statistics are essentially diagnostically vacuous. The proposed \method{} achieved an F1 score of 88.22\% and an AUC of 0.92, while resolving 76.5\% of cases using zero-shot LLM inference alone and referring only the remaining uncertain cases to the specialist DL backstop. These results demonstrate that clinically meaningful diagnostic information can be effectively captured through structured radiological descriptions and leveraged by calibrated LLMs for selective referral. More broadly, the proposed framework suggests a practical pathway for combining generalist LLM prediction with specialist artificial intelligence (AI) models in medical decision-support systems. The source code of ConfTriage is publicly available at
\underline{\href{https://github.com/rabiul-ai/ConfTriage}{github.com/rabiul-ai/ConfTriage}}.

\end{abstract}

\begin{IEEEkeywords} confidence calibration, clinical decision support, lung nodule classification, Platt scaling, triage, zero\textendash shot diagnosis.
\end{IEEEkeywords}

\section{Introduction}
\label{sec:intro}

Low\textendash dose computed tomography (CT) screening for lung cancer detects vast numbers of pulmonary nodules, the majority of which are benign \cite{armato2011lung}. Triaging these nodules into those that require specialist follow\textendash up and those that do not is a high\textendash volume, time\textendash sensitive task that creates substantial workload for thoracic radiologists. Specialized DL models trained end\textendash to\textendash end on nodule volumes have reached accuracies of 91\% and above on the standard LIDC\textendash IDRI benchmark \cite{xie2018knowledge, shi2021semi, wu2023self}, but these systems require image\textendash level training, are opaque to clinicians, and ship with no native interface for explanation in clinical language. Generalist LLMs offer a complementary capability: they consume structured and free\textendash text descriptions of a finding and return both a diagnosis and a rationale in language. A recent work \cite{mao2025assessments} has shown that GPT\textendash 4o can reach an average accuracy of 0.88 on longitudinal lung\textendash nodule malignancy prediction, hinting that the linguistic affordances of LLMs may be useful for triage. Whether such systems are reliable enough to deploy, what input modality actually drives their predictions, and whether their confidences can be calibrated for safe operational use remain open questions.

\textbf{The gap:} For pulmonary nodule malignancy classification, specialized supervised convolutional neural network (CNN) \cite{xie2018knowledge, shi2021semi, wu2023self} and vision transformer (ViT) \cite{veasey2025low} pipelines have achieved high accuracy but require task-specific training and offer limited linguistic interpretability. Vision\textendash language pipelines built on contrastive language--image pretraining (CLIP) and segment anything model (SAM) variants \cite{shaukat2024lung, zhuang2025vision} reach an area under the receiver operating characteristic curve (AUC) of approximately 0.90 on the National Lung Screening Trial cohort but degrade under distribution shift to LUNGx and provide no controlled disentanglement of input\textendash modality contributions. Generalist LLM evaluations in radiology \cite{mao2025assessments, brin2025assessing, sonoda2024diagnostic, hu2026benchmarking} report headline accuracy figures from single models on heterogeneous tasks, without isolating which information channel carries the signal, without producing a calibrated triage policy, and without comparing across vendors on a single benchmark with rigorous statistical machinery. Practitioners deciding whether to use an LLM as a triage layer therefore have neither (i) a controlled measurement of how much of the signal is linguistic versus visual, nor (ii) a deployable method that turns LLM confidence into a defensible deferral policy with formal guarantees.

\textbf{The insight:} The proposed \method{} is a methodological framework rather than a single model. Two empirical observations and one theoretical observation, taken together, motivate a structured framework for how clinical AI for triage might be organized in the LLM era. \emph{First (empirical),} standard radiology workflow already produces a small set of structured nodule descriptors (subtlety, internal structure, calcification, sphericity, margin, lobulation, spiculation, texture). If those descriptors carry most of the diagnostic signal, then a generalist LLM that operates on a faithful language rendering of them does not need to consume the CT volume directly. \emph{Second (empirical),} this property is not specific to a single vendor: if it holds for one frontier LLM, it appears to hold across multiple frontier LLMs from independent providers, which is operationally important because deployments cannot lock to a single closed model. \emph{Third (theoretical),} a calibrated LLM probability is not just a presentation aid; it is the formal object that makes a deferral policy near\textendash optimal in a precise sense (Theorem~\ref{thm:oracle}). Together, these observations support a framework with three pillars: (P1) language as the modality (the LLM consumes a structured\textendash to\textendash language rendering rather than raw pixels); (P2) calibration as the safety mechanism (verbalized confidences are transformed in logit space and Platt\textendash scaled); and (P3) a selective specialist backstop as the formal certificate (low\textendash confidence cases route to a specialist DL classifier, with provable bounds on combined error). All three pillars are empirically falsifiable. In this paper, we test P1 directly through the controlled input ablation. We develop the theoretical scaffolding for P2 and P3 (Theorems~\ref{thm:triage}, \ref{thm:oracle}). The resulting combined operating curve, obtained through Platt scaling on $\Dcalib$ and a deferral\textendash threshold sweep, extends the uncertain\textendash case rejection framework of Certain-Net\cite{islam2026certainnet} into a calibrated selective\textendash prediction setting.

\textbf{Approach:} To support the framework empirically and theoretically, we (i)~construct a deterministic input--construction function $\phimap$ that produces seven input regimes spanning `radiology attributes only,' `language description only,' `image statistics only,' and four combinations, and evaluate five frontier LLMs from independent vendors on the LIDC--IDRI benchmark under each regime; (ii)~introduce \method{}, a confidence--calibrated triage framework whose two theorems supply, respectively, an operational error budget and a calibration--to--optimality bridge; and (iii)~perform attribute--redaction and synthetic--corruption controls together with an evaluation against leave--one--out (LOO) consensus labels derived from the available LIDC--IDRI reader annotations, thereby quantifying model performance relative to expert-reader variability (Section~\ref{sec:reader_variability}). All analyses use only publicly available de--identified data; statistical testing uses paired bootstrap with Benjamini--Hochberg false--discovery--rate (FDR) control across all reported comparisons.

Overall, the major contributions of our research are as follows.
\begin{enumerate}[leftmargin=*, itemsep=2pt, topsep=2pt]
\item Method: We articulate a three\textendash pillar methodological framework for clinical triage with generalist LLMs (language as modality, calibration as safety mechanism, selective backstop as certificate) and instantiate it as \method{}, a confidence\textendash calibrated method using logit\textendash space of verbalized confidences, Platt scaling on a held\textendash out fold, and threshold\textendash based deferral to a specialist DL backstop (Section~\ref{sec:method}, Algorithm~\ref{alg:conftriage}).

\item Formal guarantees: We prove two theoretical results supporting the proposed framework. Theorem~\ref{thm:triage} provides a finite\textendash sample combined\textendash error bound that yields a per\textendash threshold operational certificate. Theorem~\ref{thm:oracle} establishes an oracle inequality showing that the excess risk of \method{} relative to the Bayes\textendash optimal deferral classifier is controlled by the $L^1$ calibration error of the LLM probability, formally connecting probability calibration to triage optimality.

\item Empirical findings: A controlled seven\textendash way input ablation across five frontier LLMs on LIDC\textendash IDRI demonstrates that natural\textendash language descriptions dominate the LLM diagnostic signal, that image\textendash statistics inputs are essentially diagnostically vacuous, and that a small subset of standard LIDC\textendash IDRI attributes localizes most of the gap. Significance is assessed using paired bootstrap with Benjamini\textendash Hochberg FDR correction (Section~\ref{sec:results}).

\item Reliability and robustness evaluation: Comprehensive evaluations are performed to assess the reliability of \method{} through LOO reader-consensus evaluation and calibration analysis, while robustness of the language-based diagnostic signal is examined through synthetic corruption controls, description-source ablation, and prompt-temperature sensitivity studies.
\end{enumerate}

\textbf{Key Distinctions:} The closest prior work \cite{mao2025assessments} evaluates GPT\textendash 4o on longitudinal CT for nodule assessment and reports 0.88 average accuracy. Our contribution differs along three specific axes: (i)~we propose a named, deployable triage method with calibration, deferral, and finite\textendash sample guarantees, rather than reporting raw model accuracy; (ii)~we perform a controlled seven\textendash way input ablation that disentangles which input channel drives the LLM signal, which Mao et~al.~did not attempt; and (iii)~we evaluate five vendors rather than one, which is operationally relevant because deployments cannot lock to a single closed model.

\begin{tcolorbox}[
  enhanced, breakable,
  colback=conftriage-lightblue!30,
  colframe=conftriage-blue,
  boxrule=0.6pt, arc=2.5pt,
  left=8pt, right=8pt, top=5pt, bottom=7pt,
  before skip=6pt, after skip=6pt,
  title={Key Findings at a Glance},
  coltitle=white, colbacktitle=conftriage-blue,
  fonttitle=\bfseries\small, toptitle=2pt, bottomtitle=2pt
]

\finding{1}{Linguistic sufficiency.}{%
Language-based descriptions provide sufficient diagnostic information for LLM-based pulmonary nodule classification and consistently outperform image-statistics inputs across all evaluated frontier LLMs (see Table~\ref{tab:main_results}).}

\findsep

\finding{2}{Image statistics are diagnostically vacuous.}{%
Across all five LLMs, image\textendash statistics input collapses to AUC in $[0.443, 0.576]$, with several models defaulting to a single class (F1 $= 0$). The mean text\textendash to\textendash image AUC gap is $\Delta = 0.358$.}

\findsep

\finding{3}{Formal guarantees turn confidence into a contract.}{%
\method{} packages the LLM into a calibrated triage layer with a specialist DL backstop. Theorem~\ref{thm:triage} supplies a finite\textendash sample combined\textendash error certificate; Theorem~\ref{thm:oracle} an oracle inequality tying $L^{1}$ calibration error to excess risk over the Bayes\textendash optimal deferral classifier.}

\end{tcolorbox}

\section{Related Work}
\label{sec:related}

\subsection{Specialized DL for Nodule Classification}
MV\textendash KBC \cite{xie2018knowledge} decomposes a 3D nodule into nine fixed views and integrates appearance, voxel, and shape submodels through ResNet50 backbones; on LIDC\textendash IDRI it reaches 91.60\% accuracy and AUC of 0.957. SDTL \cite{shi2021semi} couples nodule\textendash vs\textendash tissue pretraining with iterative feature\textendash matching pseudo\textendash labeling. Visual\textendash attention\textendash driven self\textendash supervised transfer learning \cite{wu2023self} and LoRA adaptation of pretrained large vision models \cite{veasey2025low} continue this line and report further accuracy gains. More recently, Gunawan \emph{et al.}~\cite{gunawan2024combining} proposed a multistage preprocessing pipeline combining 3D filtering with a modified segmentation network, reporting high classification accuracy through extensive image preprocessing and supervised CNN training. These methods are accurate but each requires nontrivial image\textendash level training and does not produce native linguistic interfaces for clinician auditing. Our approach is operationally distinct: we test how far a generalist LLM operating on structured\textendash to\textendash language input can go and use specialized DL only as a backstop for low\textendash confidence cases.

\subsection{Vision\textendash Language Pipelines and LLM Evaluation} Lung\textendash CADex \cite{shaukat2024lung} couples MedSAM zero\textendash shot segmentation with CLIP prefix tuning and reports detection sensitivity of 0.86 on LIDC\textendash IDRI compared with 0.76 for fully supervised baselines, with external evaluation on LUNGx. The semantic\textendash guided vision\textendash language model (VLM) of \cite{zhuang2025vision} aligns CT image features with sentence\textendash level radiologist semantic features generated via Gemini and reaches AUC 0.901 on the NLST test split. Both methods integrate vision and language but do not isolate the contribution of language alone, do not emit calibrated confidences, and do not implement a deferral policy.

Few studies \cite{ghorbian2025comprehensive, wu2024collaborative} evaluate the LLMs in Radiology. For instances, 
Brin et~al.~\cite{brin2025assessing} evaluate GPT\textendash 4V on 230 emergency\textendash room images and report pathology\textendash recognition accuracy of 35.2\%. Sonoda et~al.~\cite{sonoda2024diagnostic} compare GPT\textendash 4o, Claude 3 Opus, and Gemini 1.5 Pro on 324 `Diagnosis Please' cases. Hu et~al.~\cite{hu2026benchmarking} benchmark GPT\textendash 5 against GPT\textendash 4o on VQA\textendash RAD, SLAKE, and a curated medical\textendash physics multiple\textendash choice set. Mao et~al.~\cite{mao2025assessments} provide the most directly relevant prior result: GPT\textendash 4o reaches 0.88 average accuracy on longitudinal nodule malignancy assessment over 647 patients. None of these papers performs a controlled multi-modality ablation, packages the results as a calibrated triage method with formal guarantees, or provides multi-vendor comparisons under FDR-corrected statistical testing on a single nodule benchmark.

\subsection{Selective Classification and Learning to Defer} \method{} is, mathematically, a selective classifier with an external backstop. The selective classification framework of Geifman and El\textendash Yaniv \cite{geifman2017selective} formalizes coverage\textendash risk trade\textendash offs for classifiers that abstain; the learning\textendash to\textendash defer framework of Madras et~al.~\cite{madras2018predict} extends this to systems that route to a downstream expert. We are not aware of a prior instantiation of either framework for an LLM\textendash plus\textendash specialist hybrid in clinical triage. Our contribution within this line is the combination of a calibrated probability, formal guarantees for selective referral, and a public release suited for prospective external validation.


\section{Method: \method{}}
\label{sec:method}

\subsection{Problem Formulation}
\label{sec:setup}

Let a single lung nodule case be the tuple $\xfeat = (\ctv, \attr, \desc) \in \Xcal$, where $\ctv \in \Real^{H \times W \times D}$ is the cropped 3D CT volume centered at the nodule, $\attr \in \Real^{k}$ is a vector of $k$ structured radiology attributes, and $\desc$ is a deterministic natural\textendash language description of the nodule. We use the eight standard LIDC\textendash IDRI nodule\textendash characteristic attributes ($k = 8$): subtlety, internal structure, calcification, sphericity, margin, lobulation, spiculation, and texture. In addition, nodule diameter ($d$) and estimated anatomical position derived from the nodule centroid are included. Let $\labelvar \in \{0, 1\}$ denote the binary benign\textendash versus\textendash malignant target, and let $\Pdist$ denote the joint distribution over $(\xfeat, \labelvar)$ induced by the source population. We seek a function that, given $\xfeat$, outputs either a calibrated probability estimate $\probhat \in [0,1]$ of the Bayes posterior $\eta(\xfeat) = \Pr_{\Pdist}[\labelvar = 1 \mid \xfeat]$ or a deferral signal handing the case to a specialist DL classifier $\specdl$.

\subsection{Preprocessing}
\label{sec:preprocessing}

From each lung nodule, we extracted nine 2D views. We also generated a structured textual description from the radiological attributes, nodule diameter, and anatomical location estimated from the nodule centroid ($x, y, z$), as shown in Fig.~\ref{fig:preprocessing}.

\subsubsection{CT}
CT volumes undergo standard preprocessing (HU correction, $1{\times}1{\times}1$\,mm$^3$ resampling, $32{\times}32{\times}32$\,mm$^3$ centered crop, intensity windowing at level $-200$ and width $1200$, normalization to $[0,1]$); from each crop, three orthogonal anatomical and six diagonal slices are extracted.

\begin{figure}[tb]
\centering
\includegraphics[width=\columnwidth]{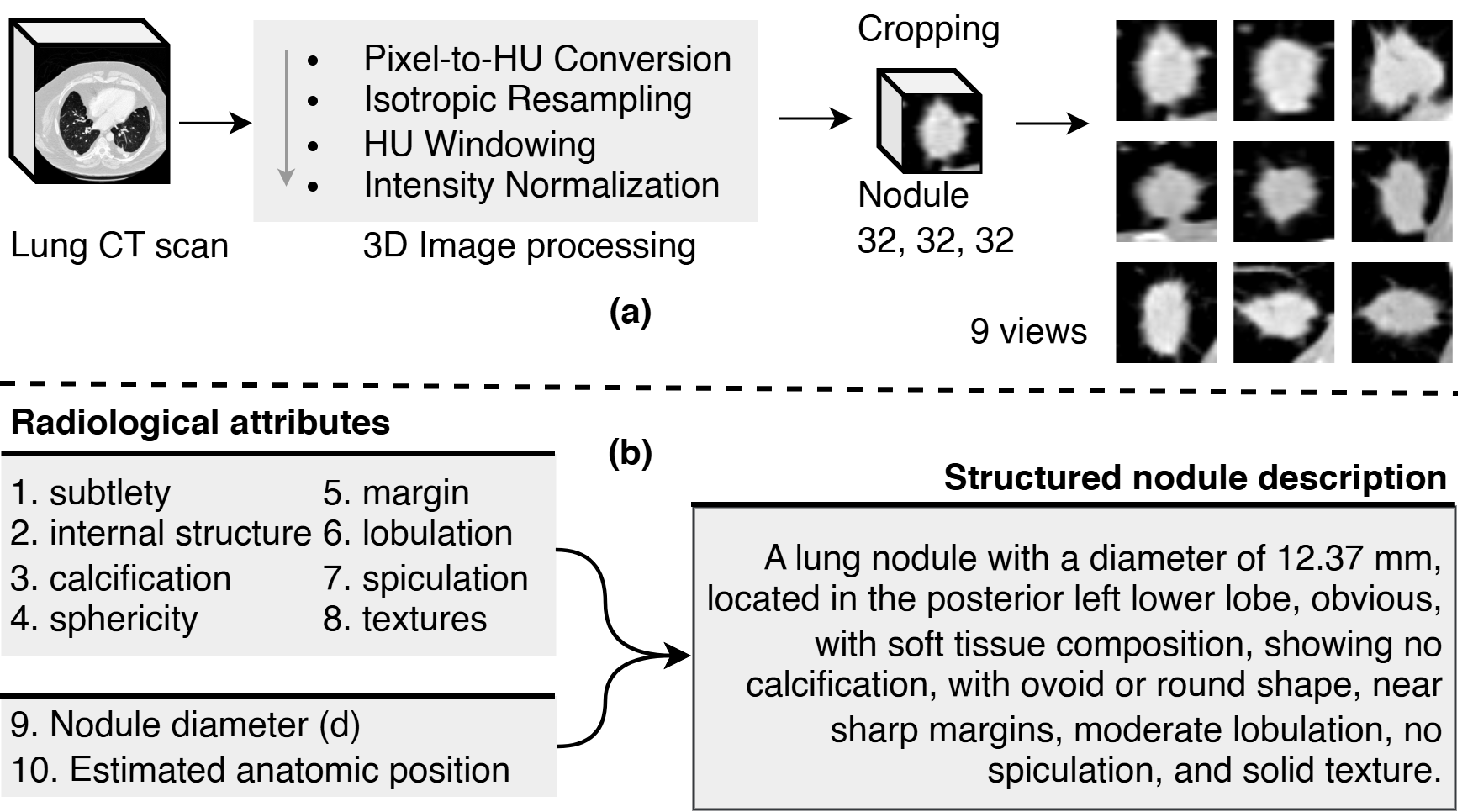}
\caption{(a) CT preprocessing for generating nine 2D views from each 3D nodule. (b) Structured nodule description generation from radiological attributes, nodule diameter, and anatomical location.}
\label{fig:preprocessing}
\end{figure}

\subsubsection{Text}
The natural\textendash language description $\desc = T(\attr)$ is generated deterministically from $\attr$ using a fixed template $T$ defined in Algorithm~\ref{alg:descgen}. The generator is reproducible, blind to the label, and released under the public GitHub repository. 

\begin{algorithm}[tb]
\caption{Deterministic description generator $T(\attr)$}
\label{alg:descgen}
\begin{algorithmic}[1]
\REQUIRE Structured nodule descriptors $\attr = (d,p,a_1,\ldots,a_8)$, where $d$ denotes nodule diameter (mm), $p$ denotes the anatomical position estimated from the nodule centroid, and $(a_1,\ldots,a_8)$ are the standard LIDC\textendash IDRI radiological attributes.
\STATE Map the anatomical position code $p \in \{1,\ldots,10\}$ to its canonical anatomical descriptor $\lambda_p$ using the predefined location dictionary (e.g., $p=1 \rightarrow$ ``posterior left lower lobe'').
\STATE Map each radiological attribute $a_j$ to its canonical lexical token $\lambda_j$ using the LIDC\textendash IDRI dictionary; for example, $a_5 \in {1,\ldots,5}$ for margin maps to a five\textendash point scale from `poorly defined'' (1) to `sharp'' (5).
\STATE Concatenate into the fixed sentence: ``A lung nodule with a diameter of $d$ mm, located in the $\lambda_p$, $\lambda_1$ (subtlety), with $\lambda_2$ (internal structure), showing $\lambda_3$ (calcification), $\lambda_4$ (sphericity) shape, $\lambda_5$ margins, $\lambda_6$ (lobulation), $\lambda_7$ (spiculation), and  $\lambda_8$ (texture).''
\STATE \textbf{return} resulting string $\desc$.
\end{algorithmic}
\end{algorithm}

\subsection{Seven Input Regimes}
\label{sec:seven_regimes}

We define seven deterministic input\textendash construction functions $\phimap$, each producing a different prompt to the LLM:
\begin{enumerate}[leftmargin=*, itemsep=1pt, topsep=2pt]
\item $\phimap_{\mathrm{rad}}(\xfeat) = \attr$ (numeric attributes only, JSON\textendash serialized).
\item $\phimap_{\mathrm{text}}(\xfeat) = \desc$ (natural\textendash language description only).
\item $\phimap_{\mathrm{img}}(\xfeat) = \imgstats(\ctv)$, where $\imgstats$ returns intensity histogram summaries (mean, standard deviation, skew, kurtosis, percentiles) and gray\textendash level co\textendash occurrence matrix features (contrast, homogeneity, energy, correlation).
\item $\phimap_{\mathrm{rad+text}}(\xfeat) = (\attr, \desc)$.
\item $\phimap_{\mathrm{rad+img}}(\xfeat) = (\attr, \imgstats(\ctv))$.
\item $\phimap_{\mathrm{text+img}}(\xfeat) = (\desc, \imgstats(\ctv))$.
\item $\phimap_{\mathrm{all}}(\xfeat) = (\attr, \desc, \imgstats(\ctv))$.
\end{enumerate}


\begin{figure*}[!t]
  \centering
\includegraphics[width=\textwidth]{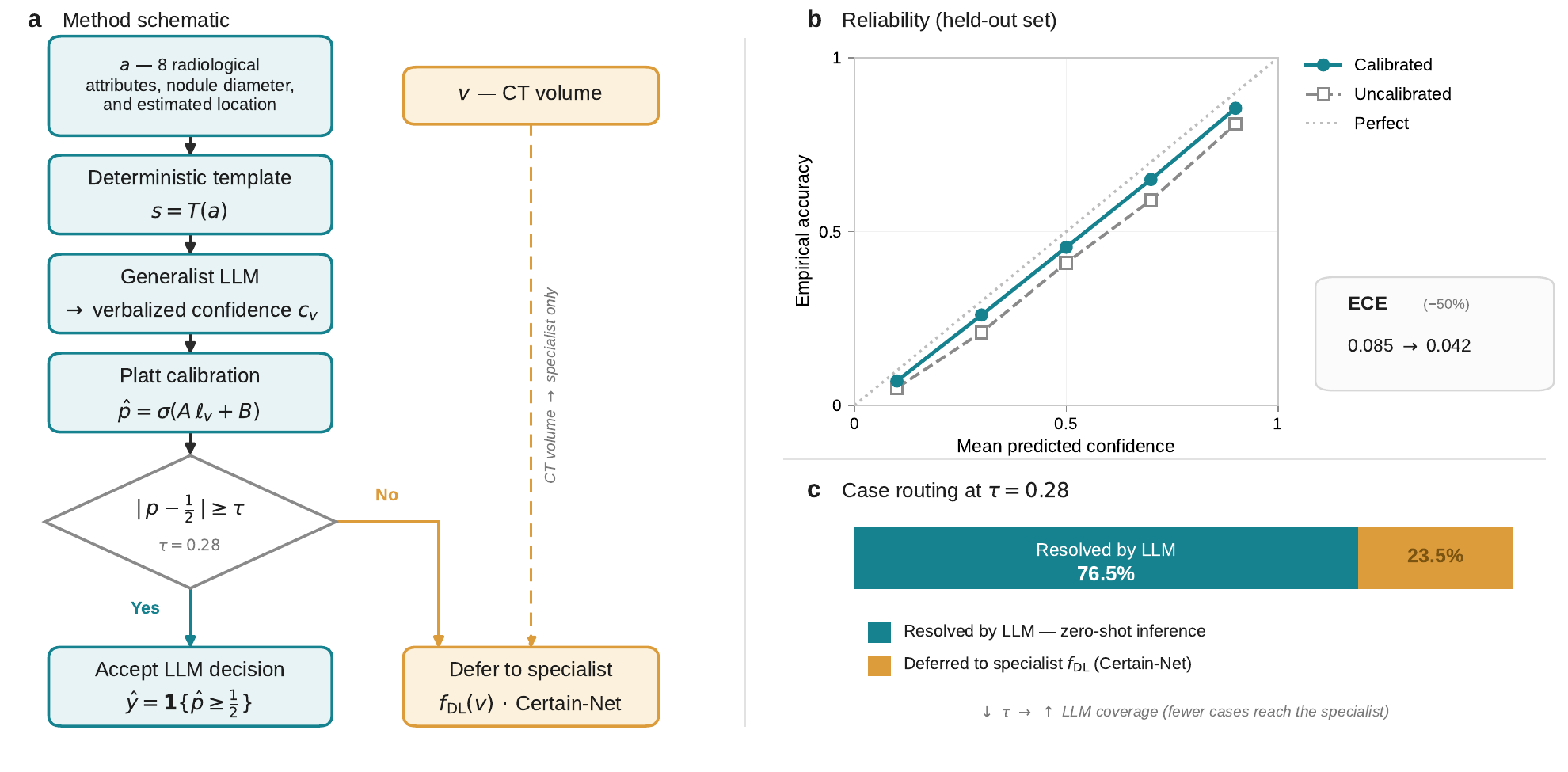}

  \caption{The \method{} triage pipeline and its operating behavior. (a) Pipeline of the confidence-aware triage system integrating a generalist LLM with a specialist DL classifier. (b) Reliability diagram showing that uncalibrated confidences are overconfident, whereas Platt scaling reduces the expected calibration error (ECE) from $0.085$ to $0.042$. (c) At the operating threshold $\tau = 0.28$, the LLM resolves $76.5\%$ of cases, while $23.5\%$ are deferred to the specialist. Lowering $\tau$ increases LLM coverage.}
  \label{fig:conftriage}
\end{figure*}

\subsection{Overview: \method{}}
\label{sec:method_overview}

The \method{} framework takes a test case $\xfeat$ as input and produces either a calibrated LLM probability $\probhat$ or a deferral to the specialist DL classifier $\specdl(\ctv)$, as outlined in Algorithm~\ref{alg:conftriage}. Fig. ~\ref{fig:conftriage}(a) summarizes the overall pipeline of \method{}. The framework consists of three components.

(1) Input construction: Build $\phimap_{\mathrm{text}}(\xfeat) = T(\attr)$ using the deterministic template (Algorithm~\ref{alg:descgen}). The empirical results in Section~\ref{sec:results} show that $\phimap_{\mathrm{text}}$ is the highest\textendash performing input regime; \method{} therefore uses it as the LLM input. (The choice is principled: the comparison is reproducible and the input is small, so the LLM's API cost stays modest.)

(2) Confidence transformation in logit space: Query the LLM under a locked prompt template and collect the verbalized confidence $\verbconf \in (0,1)$. The verbalized confidence is converted into logit space as
\begin{equation}
\ell_{v} = \mathrm{logit}(\verbconf)  , 
\label{eq:logitforms}
\end{equation}
where $\mathrm{logit}(z) = \log(z / (1 - z))$ for $z \in (0, 1)$. 
Transforming confidence into logit space yields an unbounded score representation that is well suited for subsequent probabilistic calibration.

(3) Calibration and routing:
Platt scaling \cite{platt1999probabilistic} and temperature scaling, the single\textendash parameter variant analyzed in \cite{guo2017calibration}, are standard tools for converting raw model scores into well\textendash calibrated probabilities. We follow this line and adapt it to the situation of generalist LLMs that consider the verbalized confidence generated by LLMs.

A Platt\textendash scaling map $\plattmap : \Real \to [0, 1]$ is fitted on $\Dcalib$ to convert $\ell_{\mathrm{v}}$ into a calibrated probability $\probhat$:
\begin{equation}
\probhat = \plattmap(\ell_{\mathrm{v}}) = \sigmoidf\bigl(A\, \ell_{\mathrm{v}} + B\bigr),
\label{eq:platt}
\end{equation}
with $(A, B)$ optimized using the Limited-memory Broyden--Fletcher--Goldfarb--Shanno (L\textendash BFGS) algorithm on the calibration\textendash fold cross\textendash entropy. A threshold $\thresh \in [0,1/2]$ is selected on $\Dcalib$ by sweeping candidate threshold values and evaluating the resulting trade-off between coverage and predictive performance. Cases with $|\probhat - 1/2| < \thresh$ are deferred to $\specdl$. The deferral indicator is $\defind = \indicator\{|\probhat - 1/2| < \thresh\}$.

\begin{algorithm}[t]
\caption{\method{}: confidence\textendash calibrated LLM triage with specialist backstop}
\label{alg:conftriage}
\begin{algorithmic}[1]
\REQUIRE Test case $\xfeat$; LLM $\LLM$; specialist DL classifier $\specdl$; and Platt parameters $(A, B)$ fitted on $\Dcalib$; threshold $\thresh$.
\STATE Build the language input $\phimap_{\mathrm{text}}(\xfeat) = T(\attr)$ via Algorithm~\ref{alg:descgen}.
\STATE Query $\LLM$ with the locked prompt; collect $\verbconf$ (verbalized confidence in $(0, 1)$).
\STATE Compute logits $\ell_{v} = \mathrm{logit}(\verbconf)$.
\STATE Calibrate: $\probhat \gets \sigmoidf(A \ell_{\mathrm{v}} + B)$.
\IF{$|\probhat - 1/2| \geq \thresh$}
\STATE \textbf{return} LLM prediction $\indicator\{\probhat \geq 1/2\}$ with calibrated probability $\probhat$ and $\defind \gets 0$.
\ELSE
\STATE \textbf{return} $\specdl(\ctv)$ (defer to specialist) and $\defind \gets 1$.
\ENDIF
\end{algorithmic}
\end{algorithm}

\subsection{LLMs and Spealist Backstop}
\subsubsection{Multi\textendash vendor LLM Pool}
\label{sec:vendors}

We evaluate five frontier LLMs from distinct providers, balanced between closed and open\textendash source families: OpenAI GPT\textendash 4o\textendash mini, Anthropic Claude\textendash 3.5\textendash Haiku, and Google Gemini\textendash 3.1\textendash Flash\textendash Lite (closed, accessed via vendor APIs); Mistral\textendash Large and Alibaba Qwen\textendash 2.5\textendash 72B (open\textendash weights, served through hosted endpoints). The intent of this pool is twofold: to test whether the structured textual input based performance is vendor\textendash specific or general, and to enable a closed\textendash versus\textendash open\textendash source comparison along the lines reported in Section~\ref{sec:res_main}. Each model is queried with temperature 0 (deterministic) under a locked prompt template; verbalized confidence is captured for the calibration analysis.

\subsubsection{Specialist Backstop: Certain-Net}
When the LLM-based prediction is not sufficiently confident (i.e., $|\hat{p} - \tfrac{1}{2}| < \tau$), we defer the case to the specialist DL classifier $f_{\mathrm{DL}}(v)$. The specialist backstop $\specdl$ is our recently proposed Certain-Net \cite{islam2026certainnet} model, an uncertainty-aware ensemble of Monte Carlo (MC) dropout convolutional neural networks built upon the EfficientNetV2B0 backbone and trained using a nine-view 2D representation of each pulmonary nodule.

Certain-Net achieved an accuracy of 95.63\% when making predictions on only the most certain 55.08\% of cases. However, achieving this performance requires task-specific image-based model training. Motivated by the desire to reduce this training burden, we investigate whether an LLM can accurately resolve a substantial fraction of cases through zero-shot prediction, reserving only uncertain cases for a specialist backstop. This motivation led to the development of the proposed ConfTriage framework.

\subsection{Leave--One--Out Consensus Protocol}
\label{sec:LOO_protocol}
Consider a nodule annotated by readers $\{r_1,r_2,\ldots,r_m\}$ with malignancy ratings $\{y_{i1},y_{i2},\ldots,y_{im}\}$. When reader $r_j$ is selected as the test reader, the remaining readers form the consensus:

\begin{equation}
C_{ij}
=
\operatorname{Median}
\left(
\{y_{ik}: k\neq j\}
\right).
\end{equation}

Following standard LIDC\textendash IDRI practice, ratings below 3 were mapped to the benign class, ratings above 3 were mapped to the malignant class, and ambiguous ratings equal to 3 were excluded. The binary LOO consensus label is therefore

\begin{equation}
\tilde{y}_{ij}
=
\begin{cases}
1, & C_{ij}>3,\\
0, & C_{ij}<3.
\end{cases}
\end{equation}
Thus, each reader is evaluated against the collective opinion of the remaining readers. For example,
\begin{equation}
R_1 \leftrightarrow \operatorname{Consensus}(R_2,R_3,R_4),
\end{equation}
with analogous comparisons for all available readers. This protocol provides a principled estimate of individual-reader agreement with multi-reader consensus while avoiding assumptions regarding globally consistent reader identities.

\section{Theoretical Analysis}
\label{sec:theory}

The theoretical analysis makes two claims, each supporting an operational property of the proposed paradigm. Theorem~\ref{thm:triage} bounds the combined error of \method{} (Algorithm~\ref{alg:conftriage}) and is the formal object underlying the triage operating curve. Theorem~\ref{thm:oracle} is an oracle inequality that bounds \method{}'s excess risk over the Bayes\textendash optimal deferral classifier in its class by the $L^1$ calibration error of the LLM probability, formally connecting Pillar~P2 (calibration as the safety mechanism) to Pillar~P3 (selective backstop as the certificate).

\subsection{Coverage\textendash Risk Decomposition for \method{}}
\label{sec:coverage_risk}

\begin{assumption}[i.i.d.\ held-out test fold]
\label{ass:iid}
In each cross-validation iteration, the held-out test fold
$\Dtest=\{(\xfeat_i,\labelvar_i)\}_{i=1}^n$
consists of independent samples drawn from the underlying data distribution $\Pdist$.
\end{assumption}

Under Assumption~\ref{ass:iid}, define the empirical retained\textendash case error $\hat{\Risk}_{\LLM}(\thresh)$ and the empirical deferred\textendash case error $\hat{\Risk}_{\specdl}(\thresh)$:
\begin{align}
\hat{\Risk}_{\LLM}(\thresh) &= \frac{1}{n}\sum_{i=1}^n (1 - \defind_i)\, \indicator\{\predY_i^{\LLM} \neq \labelvar_i\}, \label{eq:Rllm}\\
\hat{\Risk}_{\specdl}(\thresh) &= \frac{1}{n}\sum_{i=1}^n \defind_i\, \indicator\{\predY_i^{\specdl} \neq \labelvar_i\}, \label{eq:Rdl}
\end{align}
where $\predY_i^{\LLM} = \indicator\{\probhat_i \geq 1/2\}$ and $\predY_i^{\specdl} = \specdl(\ctv_i)$. The combined ConfTriage prediction is $\predY_i = (1 - \defind_i)\predY_i^{\LLM} + \defind_i\, \predY_i^{\specdl}$, and the combined population risk is $\Risk(\predY) = \Pr_{\Pdist}[\predY \neq \labelvar]$.

Intuitively, the next theorem says that the combined error of \method{} on the population is, with high probability, bounded by the sum of its empirical retained and deferred errors plus a Hoeffding correction. This is the formal object on which the operating\textendash curve analysis in Section~\ref{res:operating_threshold} rests.

\begin{theorem}[Finite\textendash sample combined risk bound]
\label{thm:triage}
Under Assumption~\ref{ass:iid}, for any threshold $\thresh \in [0, 1/2]$ and any $\delta \in (0, 1)$, with probability at least
$1-\delta$
over the draw of the held-out test fold $\Dtest$,
\begin{equation}
\Risk(\predY) \leq \hat{\Risk}_{\LLM}(\thresh) + \hat{\Risk}_{\specdl}(\thresh) + \sqrt{\frac{\log(1/\delta)}{2n}}.
\label{eq:triage_bound}
\end{equation}
\end{theorem}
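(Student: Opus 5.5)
The plan is to reduce the statement to a one-sided Hoeffding inequality for the mean of $n$ i.i.d.\ bounded variables. The key observation is that, for a fixed threshold $\thresh$ and fixed components, the combined prediction error is a single per-case loss.

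\textbf{Step 1: per-case decomposition.} For a fixed threshold $\thresh$ (chosen on $\Dcalib$, not on $\Dtest$), the Platt parameters $(A,B)$, the locked LLM, and $\specdl$ are all fixed before $\Dtest$ is drawn. Hence each case's deferral indicator $\defind_i$, LLM prediction $\predY_i^{\LLM}$, and specialist prediction $\predY_i^{\specdl}$ are deterministic functions of $(\xfeat_i,\labelvar_i)$. Because $\defind_i\in\{0,1\}$ and $\predY_i=(1-\defind_i)\predY_i^{\LLM}+\defind_i\predY_i^{\specdl}$, we have the identity
\begin{equation*}
Z_i:=\indicator\{\predY_i\neq\labelvar_i\}=(1-\defind_i)\indicator\{\predY_i^{\LLM}\neq\labelvar_i\}+\defind_i\indicator\{\predY_i^{\specdl}\neq\labelvar_i\}.
\end{equation*}
Averaging gives
\begin{equation*}
\frac1n\sum_{i=1}^n Z_i=\hat{\Risk}_{\LLM}(\thresh)+\hat{\Risk}_{\specdl}(\thresh),
\end{equation*}
and by the same identity in expectation, $\mathbb{E}[Z_i]=\Risk(\predY)$.

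\textbf{Step 2: concentration.} By Assumption~\ref{ass:iid}, the $Z_i$ are i.i.d.\ and take values in $[0,1]$. Hoeffding's inequality gives
\begin{equation*}
\Pr\Bigl[\Risk(\predY)-\tfrac1n\textstyle\sum_i Z_i\ge t\Bigr]\le e^{-2nt^2}.
\end{equation*}
Setting $e^{-2nt^2}=\delta$ yields $t=\sqrt{\log(1/\delta)/(2n)}$. On the complementary event, which has probability at least $1-\delta$, inequality \eqref{eq:triage_bound} holds.

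\textbf{Main obstacle.} The delicate point is justifying that everything determining $\defind_i$ and the two predictions is independent of $\Dtest$, so that the $Z_i$ really are i.i.d.\ with mean $\Risk(\predY)$. This requires three things:
\begin{itemize}
\item $(A,B)$ and $\thresh$ are fitted on $\Dcalib$, disjoint from the test fold;
\item $\specdl$ is trained without the test fold;
\item LLM queries at temperature $0$ are treated as a fixed (deterministic, or at least independent-per-case) map.
\end{itemize}
I will state these conditions explicitly as part of reading Assumption~\ref{ass:iid}.

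I will also remark that the bound is pointwise in $\thresh$. If $\thresh$ were instead selected on $\Dtest$ by sweeping a finite grid of $K$ values, a union bound would replace $\log(1/\delta)$ by $\log(K/\delta)$. The statement as given, for each fixed $\thresh$, needs no such correction.
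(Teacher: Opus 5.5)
Your proposal is correct and follows essentially the same route as the paper. Both arguments define the per-case combined loss $Z_i$, observe that its empirical mean is $\hat{\Risk}_{\LLM}(\thresh)+\hat{\Risk}_{\specdl}(\thresh)$ and its expectation is $\Risk(\predY)$, and then apply one-sided Hoeffding with $t=\sqrt{\log(1/\delta)/(2n)}$. Keep your explicit requirement that $(A,B)$, $\thresh$, and $\specdl$ be fixed independently of $\Dtest$; the paper leaves it implicit, and its remark selects $\thresh$ from pooled out-of-fold predictions that include the test fold, which is exactly the case your $\log(K/\delta)$ union-bound caveat addresses.
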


\begin{remark}
The bound is operational: for any fixed threshold $\thresh$, the right--hand side is computable from the held--out test fold and therefore provides a finite--sample certificate of the combined error. Because ConfTriage is evaluated using five--fold cross--validation, the certificate is computed independently on each held--out fold using the common operating threshold selected from the pooled out--of--fold predictions. This preserves the assumptions of Theorem~\ref{thm:triage} while providing a deployment--relevant certificate for each cross--validation split.
\end{remark}

\begin{proof}[Proof sketch of Theorem~\ref{thm:triage}]

Set $Z_i = (1 - \defind_i)\, \indicator\{\predY_i^{\LLM} \neq \labelvar_i\} + \defind_i\, \indicator\{\predY_i^{\specdl} \neq \labelvar_i\}$. Each $Z_i\in\{0,1\}$ and, for the current held-out cross-validation fold, the $Z_i$ are i.i.d. under Assumption~\ref{ass:iid}. By linearity of expectation and the disjoint decomposition $1 = (1 - \defind_i) + \defind_i$,
\begin{equation*}
\mathbb{E}_{\Pdist}[Z_1] = \Pr_{\Pdist}[\predY \neq \labelvar] = \Risk(\predY).
\end{equation*}
Hoeffding's inequality applied to bounded i.i.d.\ variables yields, for any $t > 0$,
$\Pr\bigl(\mathbb{E}[Z_1] - \bar{Z} > t\bigr) \leq \exp(-2 n t^2)$,
where $\bar{Z} = n^{-1}\sum_i Z_i = \hat{\Risk}_{\LLM}(\thresh) + \hat{\Risk}_{\specdl}(\thresh)$. Setting the right\textendash hand side equal to $\delta$ and solving gives $t = \sqrt{\log(1/\delta)/(2n)}$, which yields the one\textendash sided upper bound on $\Risk(\predY)$ in~\eqref{eq:triage_bound}. (A symmetric two\textendash sided bound on $|\mathbb{E}[Z_1] - \bar{Z}|$ would replace $\log(1/\delta)$ by $\log(2/\delta)$ via union bound; we use the one\textendash sided form because we only need an upper bound on the risk.)
\end{proof}

\begin{remark}[Tightness]
Theorem~\ref{thm:triage} uses no structural information about the joint distribution of $(\probhat, \labelvar, \specdl(\ctv))$ and is therefore the tightest \emph{distribution\textendash free} bound that can be obtained from the empirical risk. Tighter bounds are available under additional assumptions: under bounded variance of $Z_i$, Bernstein's inequality replaces the slack with $\mathcal{O}(\sqrt{V_n \log(1/\delta)/n} + \log(1/\delta)/n)$ where $V_n$ is the empirical variance; under realizability or bounded VC dimension on the deferral rule, uniform convergence over $\thresh$ replaces the slack with $\mathcal{O}(\sqrt{(\log K + \log(1/\delta))/n})$ for $K$ candidate thresholds. We report the simpler bound here for transparency.
\end{remark}





\subsection{Oracle Inequality: Calibration Controls Excess Risk}
\label{sec:oracle}

Theorem~\ref{thm:triage} certifies the combined error rate but says nothing about whether \method{} is using the available signal \emph{well} relative to a Bayes\textendash optimal benchmark. We now show that the gap between \method{}'s combined risk and the risk of the Bayes\textendash optimal deferral classifier in its class is controlled by the $L^1$ calibration error of the LLM probability. This is the formal bridge between calibration (a measurable diagnostic) and triage optimality (a target property).

\textbf{Bayes baseline.} Let $\eta(\xfeat) = \Pr_{\Pdist}[\labelvar = 1 \mid \xfeat]$ be the Bayes posterior. For a fixed deferral threshold $\thresh$ and a fixed specialist backstop $\specdl$, define the \emph{Bayes\textendash optimal deferral classifier with backstop $\specdl$} by
\begin{equation}
\predY^*_{\thresh}(\xfeat) =
\begin{cases}
\indicator\{\eta(\xfeat) \geq 1/2\} & \text{if } |\eta(\xfeat) - 1/2| \geq \thresh,\\
\specdl(\ctv) & \text{otherwise},
\end{cases}
\label{eq:bayes_oracle}
\end{equation}
with risk $\Risk(\predY^*_{\thresh}) = \Pr_{\Pdist}[\predY^*_{\thresh} \neq \labelvar]$. The classifier $\predY^*_{\thresh}$ has perfect knowledge of the posterior on retained cases and uses the same backstop on deferred cases as \method{}; it is the strongest baseline that respects the same operating constraint. Define the $L^1$ calibration error
\begin{equation}
\Delta(\probhat) = \mathbb{E}_{\Pdist}\bigl|\probhat(\xfeat) - \eta(\xfeat)\bigr|,
\label{eq:l1cal}
\end{equation}
the expected pointwise distance from the Bayes posterior, and the deferral discrepancy probability $\pi_{\Delta}(\thresh) = \Pr_{\Pdist}[\hat{\defind}(\xfeat) \neq \defind^*(\xfeat)]$ where $\hat{\defind}(\xfeat) = \indicator\{|\probhat - 1/2| < \thresh\}$ and $\defind^*(\xfeat) = \indicator\{|\eta - 1/2| < \thresh\}$.

Intuitively, the next theorem says that any \method{}\textendash style policy whose probability is close to the Bayes posterior in $L^1$ also has near\textendash Bayes\textendash optimal risk, with two interpretable sources of slack: $L^1$ calibration error and deferral\textendash boundary disagreement.

\begin{theorem}[Excess risk of \method{} versus Bayes\textendash optimal deferral]
\label{thm:oracle}
Let $\predY_{\thresh}$ denote the population\textendash level \method{} classifier (Algorithm~\ref{alg:conftriage}) at threshold $\thresh \in [0, 1/2]$. For any predictor $\probhat: \Xcal \to [0, 1]$, any specialist backstop $\specdl$, and any $\thresh$,
\begin{equation}
\Risk(\predY_{\thresh}) - \Risk(\predY^*_{\thresh}) \leq 2\, \Delta(\probhat) + \pi_{\Delta}(\thresh).
\label{eq:oracle_bound}
\end{equation}
\end{theorem}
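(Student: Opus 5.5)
We split the population according to whether the two deferral indicators agree. Let $E = \{\hat{\defind}(\xfeat) = \defind^*(\xfeat)\}$ and write
\begin{equation*}
\Risk(\predY_{\thresh}) - \Risk(\predY^*_{\thresh}) = \mathbb{E}\bigl[(\indicator\{\predY_{\thresh} \neq \labelvar\} - \indicator\{\predY^*_{\thresh} \neq \labelvar\})\indicator_{E}\bigr] + \mathbb{E}\bigl[(\cdots)\indicator_{E^c}\bigr].
\end{equation*}
On $E^c$ the integrand is bounded by $1$ in absolute value. Hence the second term is at most $\Pr[E^c] = \pi_{\Delta}(\thresh)$, and this gives the deferral-discrepancy slack directly. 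On $E \cap \{\hat{\defind} = \defind^* = 1\}$ both classifiers output $\specdl(\ctv)$, so the integrand vanishes. This is where we use that the oracle is built with the same backstop. The only remaining region is $E \cap \{\hat{\defind} = \defind^* = 0\}$, where both classifiers are plug-in rules: $\indicator\{\probhat \geq 1/2\}$ versus $\indicator\{\eta \geq 1/2\}$.

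On that retained region we condition on $\xfeat$. Since $\predY$ is a deterministic function of $\xfeat$ on retained cases, the conditional error of a rule $h$ is $\eta(\xfeat)$ if $h=0$ and $1-\eta(\xfeat)$ if $h=1$. The conditional excess of $\indicator\{\probhat\ge 1/2\}$ over $\indicator\{\eta\ge 1/2\}$ is therefore $|2\eta(\xfeat)-1|\,\indicator\{\text{the two rules disagree}\}$. This is the standard plug-in classification lemma. When they disagree, $\eta$ and $\probhat$ lie on opposite sides of $1/2$, so $|\eta - 1/2| \leq |\eta - \probhat|$. It follows that the pointwise excess is at most $2|\probhat(\xfeat) - \eta(\xfeat)|$. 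Taking expectations over the retained-and-agreeing region and bounding by the full expectation gives at most $2\Delta(\probhat)$.

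Adding the two pieces yields \eqref{eq:oracle_bound}. The argument is a single-case analysis and needs no independence assumptions, consistent with the statement holding for any $\probhat$, $\specdl$ and $\thresh$.

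\textbf{Main obstacle.} The step needing care is the conditional computation. It relies on $\predY_{\thresh}$ and $\predY^*_{\thresh}$ being functions of $\xfeat$ alone on retained cases, so that $\Pr[\labelvar \neq h(\xfeat)\mid\xfeat]$ is expressed through $\eta$. It also relies on $\specdl(\ctv)$ being measurable in $\xfeat$, which holds since $\ctv$ is a component of $\xfeat$. If $\specdl$ were randomized, I would instead note that it is applied identically to both classifiers, so its contribution still cancels on the jointly deferred region.

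As a sanity remark, the constant $2$ could be tightened to $1$ via $|2\eta-1| \leq 2|\eta-\probhat|$ combined with $|2\eta - 1| = 2|\eta-1/2|$. The stated bound is weaker and so follows a fortiori.
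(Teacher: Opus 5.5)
Your proposal is correct and is essentially the paper's own proof. Both split by the four values of $(\hat{\defind}, \defind^*)$, bound the deferral-disagreement region by $\pi_{\Delta}(\thresh)$, note that the jointly deferred region contributes zero because the backstop $\specdl$ is shared, and on the jointly retained region apply the plug-in bound $|2\eta-1|\,\indicator\{\text{rules disagree}\}\le 2|\probhat-\eta|$. Drop your closing ``sanity remark'', however: the chain you quote gives $|2\eta-1| = 2|\eta-1/2| \le 2|\eta-\probhat|$, which is exactly the constant $2$, and the constant cannot be lowered to $1$ in general, since $\thresh=0$, $\eta\equiv 0$, $\probhat\equiv 1/2$ gives excess risk $1 = 2\Delta(\probhat)$ with $\pi_{\Delta}(0)=0$.
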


\begin{remark}[Tsybakov\textendash margin corollary]
\label{rem:tsybakov}
Under the standard Tsybakov\textendash margin condition with parameter $\alpha \in [0, \infty]$ and constant $C$ near each deferral boundary, namely $\Pr_{\Pdist}\bigl[\bigl|\eta(\xfeat) - 1/2 - \thresh\bigr| \leq t \text{ or } \bigl|\eta(\xfeat) - 1/2 + \thresh\bigr| \leq t\bigr] \leq 2 C t^\alpha$ for all $t \geq 0$, the deferral discrepancy is controlled by the calibration error: $\pi_{\Delta}(\thresh) \leq 2 C \mathbb{E}_{\Pdist}|\probhat - \eta|^\alpha$. Specializing to $\alpha = 1$,
\begin{equation*}
\Risk(\predY_{\thresh}) - \Risk(\predY^*_{\thresh}) \leq 2(1 + C)\, \Delta(\probhat),
\end{equation*}
i.e., the excess risk is first\textendash order in the $L^1$ calibration error.
\end{remark}

\begin{remark}[Why this is the right comparison]
The Bayes\textendash optimal benchmark in~\eqref{eq:bayes_oracle} uses the \emph{same} backstop $\specdl$ that \method{} uses. We therefore measure how well \method{} exploits the LLM's predictive content, separating it from the question of how good the backstop is. A weaker baseline (the Bayes classifier with no deferral) would conflate the two. A stronger baseline (the Bayes classifier with the Bayes\textendash optimal backstop) would not give actionable guidance because that backstop is not implementable.
\end{remark}

\begin{remark}[Connection to ECE]
The unobservable quantity $\Delta(\probhat)$ is bounded above, under mild regularity on $\eta$, by the ECE of $\probhat$ plus a binning approximation term that vanishes as the bin width $\to 0$. Theorem~\ref{thm:oracle} therefore translates directly into a bound expressed in terms of the ECE, which we measure empirically. The Platt\textendash scaling step in Algorithm~\ref{alg:conftriage} is the operational lever that drives ECE down on $\Dcalib$ and, through Theorem~\ref{thm:oracle}, drives the excess risk of \method{} down on $\Dtest$.
\end{remark}

\begin{proof}[Proof sketch of Theorem~\ref{thm:oracle}]

We decompose pointwise excess risk by the four cases of $(\hat{\defind}, \defind^*) \in \{0, 1\}^2$. When both predict ($\hat{\defind} = \defind^* = 0$), the pointwise excess risk is at most $|2\eta - 1| \indicator\{\hat{g} \neq g^*\}$, which is bounded by $2|\probhat - \eta|$ by the classical plug\textendash in identity (when the two classifiers disagree, $|\eta - 1/2| \leq |\probhat - \eta|$). When both defer, the excess is zero. When the deferral choices disagree, the pointwise excess is at most one. Taking expectations and noting that the disagreement set has measure $\pi_{\Delta}(\thresh)$ yields~\eqref{eq:oracle_bound}.
\end{proof}

\subsection{Summary of Theoretical Guarantees}

Theorems~\ref{thm:triage} and~\ref{thm:oracle} together provide a self\textendash contained statistical foundation for the proposed framework. Table~\ref{tab:theorem_summary} summarizes the theoretical guarantees, together with their assumptions and operational implications. Theorem~\ref{thm:triage} converts the empirical retained and deferred errors into a finite\textendash sample certificate of the combined risk. Theorem~\ref{thm:oracle} formally connects calibration to triage optimality: a well\textendash calibrated LLM probability suffices for near\textendash Bayes\textendash optimal deferral.

\begin{table*}[tb]
\centering
\caption{Summary of the theoretical guarantees. Each theorem links an observable quantity to a deployment-relevant property.}
\label{tab:theorem_summary}
\setlength{\tabcolsep}{6pt}
\renewcommand{\arraystretch}{1.15}
\small
\begin{tabular}{@{}p{0.105\textwidth} p{0.235\textwidth} p{0.165\textwidth} p{0.185\textwidth} p{0.215\textwidth}@{}}
\toprule
\textbf{Theorem} &
\textbf{What it bounds} &
\textbf{Assumption} &
\textbf{Typical slack} &
\textbf{Operational implication} \\
\midrule

Thm.~\ref{thm:triage}
\newline (Combined risk)
&
Population error
$\Risk(\predY)\leq$
empirical retained + deferred error + slack
&
i.i.d.\ held--out test fold
&
$\sqrt{\log(1/\delta)/(2n)}$;
$\approx0.0886$
at $n=191$, $\delta=0.05$
&
Per-fold finite-sample certificate of combined error for any threshold $\thresh$. \\

Thm.~\ref{thm:oracle}
\newline (Oracle inequality)
&
$\Risk(\predY)-\Risk^*
\leq
2\Delta+\pi_\Delta$,
with
$\Delta=\mathbb{E}|\probhat-\eta|$
&
None beyond integrability
(Tsybakov margin tightens to
$\mathcal{O}(\Delta)$)
&
Margin-dependent;
first-order in $\Delta$
under linear margin
&
Calibration (measurable via ECE) directly controls the gap to Bayes-optimal deferral. \\

\bottomrule
\end{tabular}
\end{table*}

\section{Datasets and Evaluation Protocol}
\label{sec:datasets}

\subsection{Dataset}

The LIDC\textendash IDRI \cite{armato2011lung} is the primary benchmark that was utilized for conducting the research. The full database contains 1018 thoracic CT scans with lesions marked as a nodule by at least one thoracic radiologist. Following standard practice \cite{xie2018knowledge, wu2023self, shi2021semi}, we exclude nodules with diameter $\leq$ 3\,mm, and we exclude nodules with median malignancy rating equal to 3 because of irreducible diagnostic ambiguity. After these inclusion criteria, 955 nodules are retained (427 benign, 528 malignant), partitioned into five patient\textendash wise cross\textendash validation folds.

We partitioned the labeled dataset $\Dcal = \{(\xfeat_i, \labelvar_i)\}_{i=1}^N$ using patient-level five-fold cross-validation to prevent patient-level leakage. For each iteration, fold $i$ was used as the held-out test set $\Dtest$, while fold $(i+1)\bmod 5$ served as the calibration set $\Dcalib$ for fitting the Platt-scaling parameters and determining the operating threshold. The learned calibration model was then applied only to the corresponding test fold, ensuring that each sample received exactly one out-of-fold calibrated probability.

\subsection{Reporting Protocol}
\label{sec:reporting}

For all models and ablation experiments, we report accuracy (Acc.), F1 score, and AUC. In addition, the expected calibration error (ECE), Brier score, sensitivity (Sen.), specificity (Spe.), and Cohen's kappa ($\kappa$) are reported where relevant. All point estimates are accompanied by 95\% percentile bootstrap confidence intervals computed from 300 patient-level bootstrap resamples of $\Dtest$, where resampling with replacement at the patient level preserves the dependence structure introduced by multiple nodules per patient.

Pairwise testing under FDR control: Pairwise model comparisons under a fixed ablation use the paired bootstrap on AUC at the patient level; the resulting $p$\textendash values are subjected to Benjamini\textendash Hochberg false\textendash discovery\textendash rate correction \cite{benjamini1995controlling} across all reported comparisons. Tables~\ref{tab:pairwise_auc} use 300 bootstrap resamples for these pairwise tests. Because this bootstrap budget implies a minimum attainable raw $p$-value of approximately $1/(300+1)\approx 0.0033$, we report significance as $^{**}$ for $q < 0.01$ and $^{*}$ for $q < 0.05$.

\section{Results and Analysis}
\label{sec:results}

This section is organized to match the contributions: Section~\ref{sec:res_main} reports the main triage table comparing \method{} against specialized DL; Section~\ref{sec:res_ablation} establishes the empirical discovery that linguistic descriptions dominate the LLM signal; Section~\ref{res:operating_threshold} reports calibration and triage operating curves and instantiates the bounds of Theorems~\ref{thm:triage}; Section~\ref{sec:reader_variability} places \method{}'s combined performance in the context of the LIDC\textendash IDRI reference\textendash reader annotations using public data only.

\subsection{Main Comparison and \method{} Performance}
\label{sec:res_main}

Table \ref{tab:comparison} presents the performance comparison among the evaluated LLMs, the DL backstop, and the proposed \method{}. The best-performing LLM results are underlined, while the overall best results across the entire table are highlighted in bold.

\subsubsection{LLM vs \method{} and Backstop Performance}

For the LLMs, Table \ref{tab:comparison} reveals two key observations. First, two of the five evaluated LLMs---Gemini\textendash 3.1\textendash Flash\textendash Lite (AUC = $0.907$) and Mistral\textendash Large (AUC = $0.901$)---achieved an AUC above 0.9 without any image\textendash level training. This finding supports the operational hypothesis that, given a faithful language representation of structured radiological attributes, an LLM can attain discriminative performance comparable to specialist DL models without direct image-based training. Second, two high-performing models based on AUC (Mistral\textendash Large and Qwen\textendash 2.5\textendash 72B) are open-weight models, reducing vendor lock-in concerns that often limit hospital deployment.

The proposed \method{} achieved the highest F1 score ($0.882$), while the DL backstop (Certain-Net) attained the highest AUC ($0.933$). Notably, a recent longitudinal LLM-based study reported an accuracy of $0.88$, which is comparable to the accuracy achieved by \method{}. The primary motivation behind \method{} is that 76.5\% of cases can be handled without requiring any image-trained model, while still achieving the best overall F1 score.

\begin{table}[tb]
\centering
\caption{Performance comparison among the LLM, DL backstop, and the proposed \method{}. The language-only LLM achieved performance comparable to that of the specialist DL backstop.}
\label{tab:comparison}
\setlength{\tabcolsep}{4pt}
\renewcommand{\arraystretch}{1.05}
\small
\begin{tabular}{@{}l c c c c@{}}
\toprule
\textbf{System} & \textbf{Acc.} & \textbf{F1} & \textbf{AUC} & \textbf{AP} \\
\midrule
\multicolumn{5}{@{}l}{\emph{LLM (zero\textendash shot, language\textendash only)}} \\
Gemini\textendash 3.1\textendash Flash      & \underline{0.830} & 0.844 & \underline{0.907} & \underline{0.894} \\
Mistral\textendash Large                    & 0.822 & 0.829 & 0.901 & 0.892 \\
Qwen\textendash 2.5\textendash 72B          & 0.827 & \underline{0.857} & 0.880 & 0.849 \\
Claude\textendash 3.5\textendash Haiku      & 0.736 & 0.775 & 0.798 & 0.790 \\
GPT\textendash 4o\textendash mini            & 0.731 & 0.719 & 0.796 & 0.798 \\
\midrule
\multicolumn{5}{@{}l}{\emph{Specialist DL backstop}} \\
Certain\textendash Net (EffNetV2B0)   & 0.851 & 0.864 & \textbf{0.933} & \textendash{} \\

\midrule
\multicolumn{5}{@{}l}{\emph{LLM with DL backstop (Calibrated)}} \\
\method{} & 0.869 & \textbf{0.882} & 0.920 & \textendash{} \\

\midrule
\multicolumn{5}{@{}l}{\emph{Reference (different task\textsuperscript{a}; orientation only)}} \\
Mao et~al.~\cite{mao2025assessments} (longitudinal) & \textbf{0.880} & \textendash{} & \textendash{} & \textendash{} \\

\bottomrule
\end{tabular}

\vspace{2mm}
\footnotesize
\textsuperscript{a}Mao et~al.~\cite{mao2025assessments} is on a different (longitudinal CT) task and included for orientation only.

\end{table}

\subsubsection{Closed versus Open\textendash Source Aggregate} Aggregating by family (closed: GPT\textendash 4o\textendash mini, Claude\textendash 3.5\textendash Haiku, Gemini\textendash 3.1\textendash Flash\textendash Lite; open\textendash source: Mistral\textendash Large, Qwen\textendash 2.5\textendash 72B), the family\textendash level aggregates on the language\textendash only regime are similar (mean AUC 0.851 for closed, 0.880 for open\textendash source; Fig.~\ref{fig:family_heatmap}). The single best configuration is Gemini\textendash 3.1\textendash Flash\textendash Lite (closed, AUC 0.907), and the second\textendash best is Mistral\textendash Large (open\textendash source, AUC 0.901); the spread within each family exceeds the across\textendash family aggregate gap. The operational implication is that Pillar~P1 (language as the modality) does not require closed\textendash vendor models, which supports the multi\textendash vendor portability claim of Section~\ref{sec:vendors}.

\begin{figure}[tb]
\centering
\includegraphics[width=1\columnwidth]{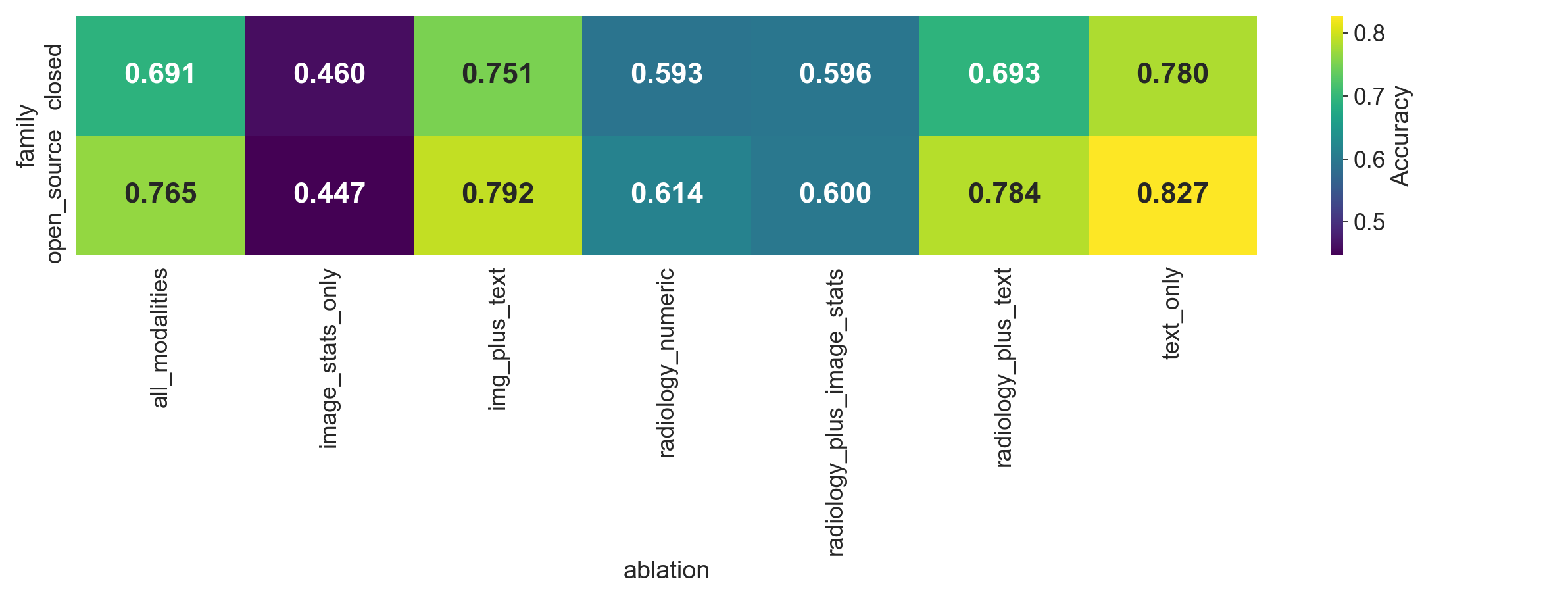}\\[2pt]
\includegraphics[width=0.95\columnwidth]{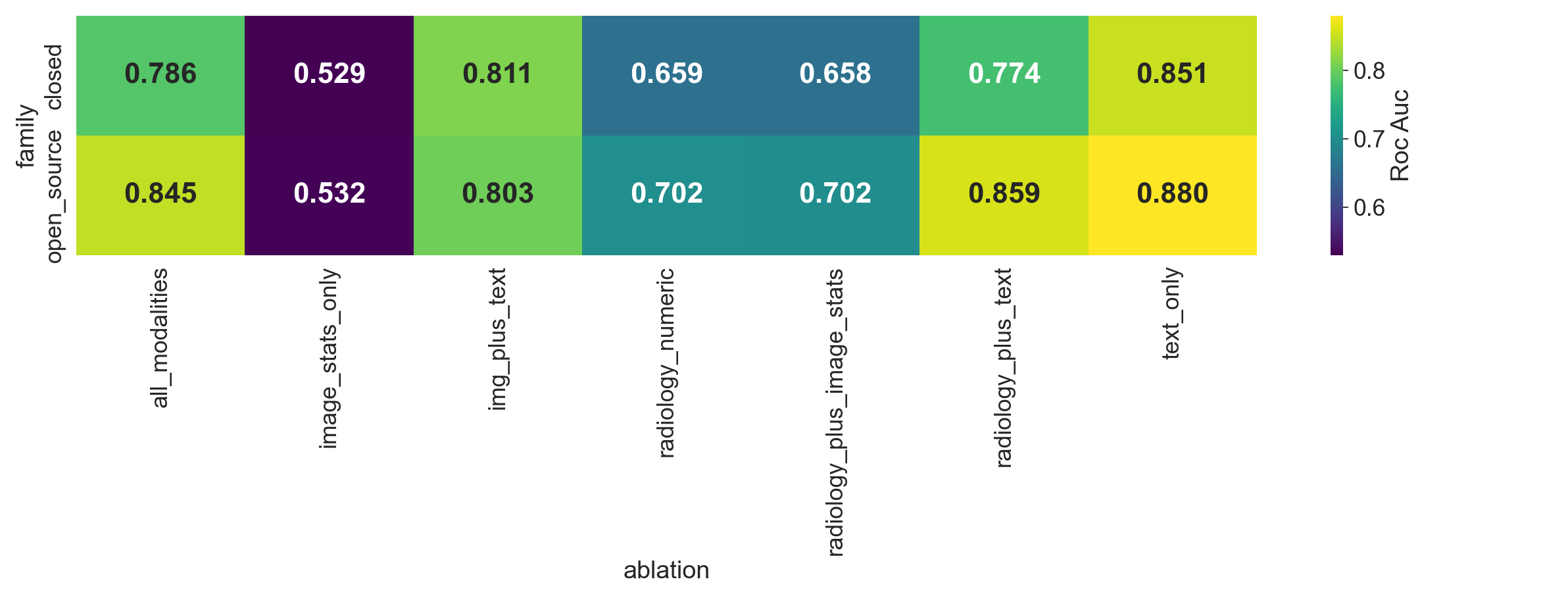}

\caption{Performance comparison between closed- and open-source LLMs across seven input regimes. Accuracy (top) and AUC (bottom) exhibit similar qualitative trends for both model families. Open-source models match or slightly outperform closed-source models in language-only and language-inclusive settings.}

\label{fig:family_heatmap}
\end{figure}

\begin{figure}[tb]
    \centering
    \includegraphics[width=1\linewidth]{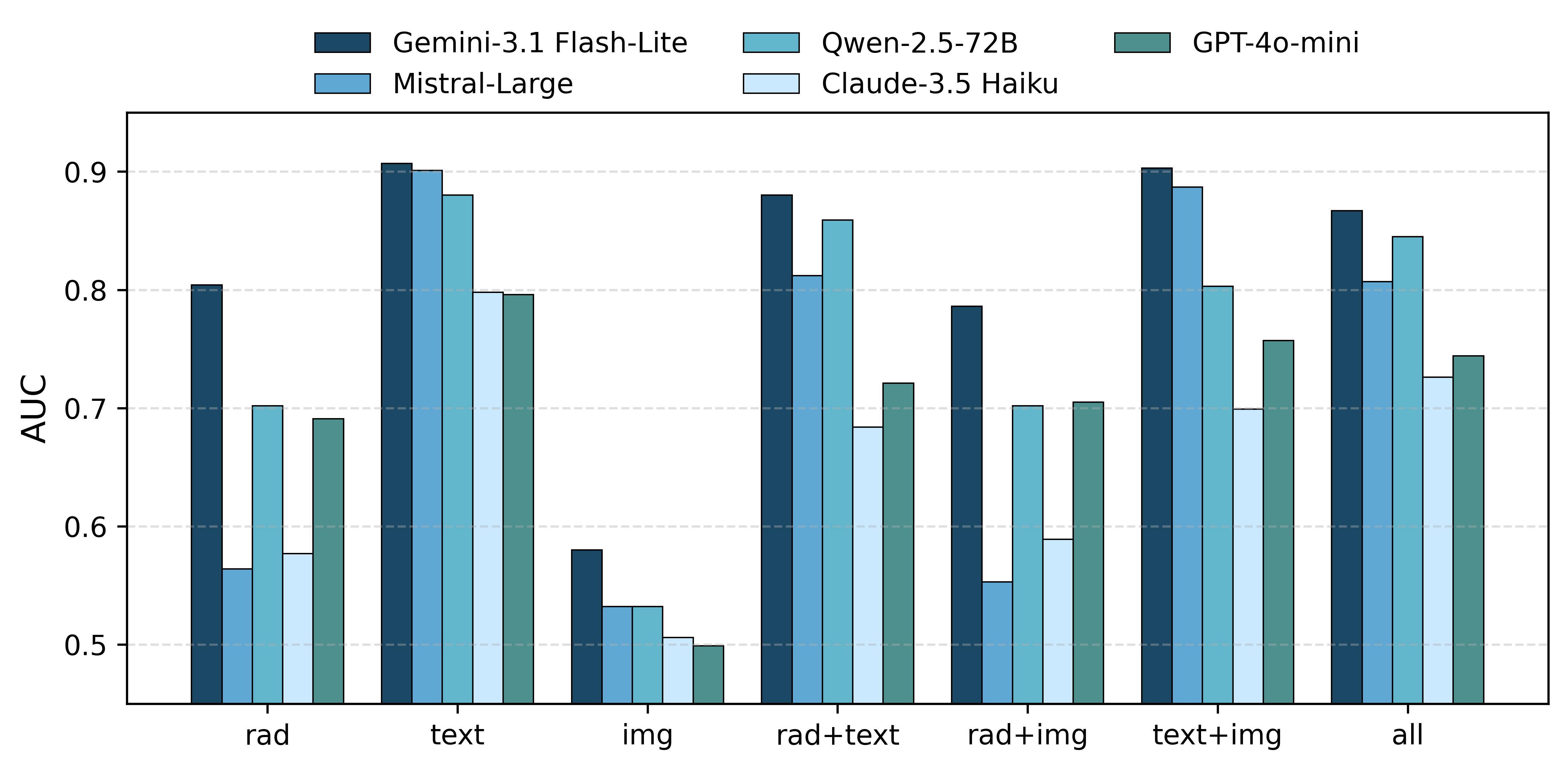}
    \caption{Gemini-3.1-Flash-Lite achieved the highest AUC among all LLMs across all seven input regimes. The text-only regime yielded the best performance, and language\textendash inclusive regimes consistently outperformed the remaining regimes.}
    \label{fig:AUC_comparison}
\end{figure}

\subsection{Empirical Discovery}
\label{sec:res_ablation}

\begin{table}[tb]
\centering
\caption{Per-model performance on LIDC\textendash IDRI ($n{=}955$ nodules) across seven input regimes. \textbf{Bold} = best per model; \underline{underline} = second-best. Best overall: Gemini-3.1-Flash-Lite + $\phimap_{\mathrm{text}}$ (AUC 0.907).}
\label{tab:main_results}
\setlength{\tabcolsep}{3.5pt}
\renewcommand{\arraystretch}{1.02}
\footnotesize
\begin{tabular}{@{}ll cccc@{}}
\toprule
\textbf{Model} & \textbf{Regime} & \textbf{Acc.} & \textbf{F1} & \textbf{AUC} & \textbf{AP} \\
\midrule
\multirow{7}{*}{\shortstack[l]{Gemini-3.1\\Flash-Lite}} & $\phimap_{\mathrm{rad}}$ & .644 & .551 & .804 & .817 \\
 & $\phimap_{\mathrm{text}}$ & \textbf{.830} & \textbf{.844} & \textbf{.907} & \textbf{.894} \\
 & $\phimap_{\mathrm{img}}$ & .447 & .000 & .580 & .622 \\
 & $\phimap_{\mathrm{rad+text}}$ & .703 & .656 & .880 & .872 \\
 & $\phimap_{\mathrm{rad+img}}$ & .644 & .551 & .786 & .806 \\
 & $\phimap_{\mathrm{text+img}}$ & \underline{.819} & \underline{.831} & \underline{.903} & \underline{.893} \\
 & $\phimap_{\mathrm{all}}$ & .679 & .615 & .867 & .858 \\
\cmidrule(lr){1-6}

\multirow{7}{*}{Mistral-Large} & $\phimap_{\mathrm{rad}}$ & .518 & .589 & .564 & .614 \\
 & $\phimap_{\mathrm{text}}$ & \underline{.822} & \underline{.829} & \textbf{.901} & \textbf{.892} \\
 & $\phimap_{\mathrm{img}}$ & .444 & .007 & .532 & .567 \\
 & $\phimap_{\mathrm{rad+text}}$ & .755 & .756 & .812 & .805 \\
 & $\phimap_{\mathrm{rad+img}}$ & .521 & .604 & .553 & .594 \\
 & $\phimap_{\mathrm{text+img}}$ & \textbf{.839} & \textbf{.854} & \underline{.887} & \underline{.874} \\
 & $\phimap_{\mathrm{all}}$ & .754 & .752 & .807 & .796 \\
\cmidrule(lr){1-6}

\multirow{7}{*}{Qwen-2.5-72B} & $\phimap_{\mathrm{rad}}$ & .614 & .499 & .702 & .742 \\
 & $\phimap_{\mathrm{text}}$ & \textbf{.827} & \textbf{.857} & \textbf{.880} & \textbf{.849} \\
 & $\phimap_{\mathrm{img}}$ & .447 & .000 & .532 & .570 \\
 & $\phimap_{\mathrm{rad+text}}$ & .784 & .783 & \underline{.859} & \underline{.844} \\
 & $\phimap_{\mathrm{rad+img}}$ & .600 & .466 & .702 & .725 \\
 & $\phimap_{\mathrm{text+img}}$ & \underline{.792} & \underline{.838} & .803 & .762 \\
 & $\phimap_{\mathrm{all}}$ & .765 & .755 & .845 & .835 \\
\cmidrule(lr){1-6}

\multirow{7}{*}{\shortstack[l]{Claude-3.5\\Haiku}} & $\phimap_{\mathrm{rad}}$ & .573 & .719 & .577 & .596 \\
 & $\phimap_{\mathrm{text}}$ & \textbf{.736} & \textbf{.775} & \textbf{.798} & \textbf{.790} \\
 & $\phimap_{\mathrm{img}}$ & .503 & .531 & .506 & .554 \\
 & $\phimap_{\mathrm{rad+text}}$ & .648 & .753 & .684 & .677 \\
 & $\phimap_{\mathrm{rad+img}}$ & .562 & .713 & .589 & .610 \\
 & $\phimap_{\mathrm{text+img}}$ & .630 & .718 & .699 & \underline{.720} \\
 & $\phimap_{\mathrm{all}}$ & \underline{.669} & \underline{.756} & \underline{.726} & .718 \\
\cmidrule(lr){1-6}

\multirow{7}{*}{GPT-4o-mini} & $\phimap_{\mathrm{rad}}$ & .639 & \textbf{.740} & .691 & .682 \\
 & $\phimap_{\mathrm{text}}$ & \textbf{.731} & \underline{.719} & \textbf{.796} & \textbf{.798} \\
 & $\phimap_{\mathrm{img}}$ & .447 & .000 & .499 & .556 \\
 & $\phimap_{\mathrm{rad+text}}$ & .667 & .674 & .721 & .721 \\
 & $\phimap_{\mathrm{rad+img}}$ & .655 & .718 & .705 & .706 \\
 & $\phimap_{\mathrm{text+img}}$ & \underline{.715} & .709 & \underline{.757} & \underline{.757} \\
 & $\phimap_{\mathrm{all}}$ & .664 & .665 & .744 & .749 \\
\bottomrule
\end{tabular}
\end{table}


\subsubsection{Language Dominates the Signal in LLMs}

Across all five LLMs applied, language\textendash only input ($\phimap_{\mathrm{text}}$) achieves either the top or second\textendash best AUC as shown in Table \ref{tab:main_results}. Fig. \ref{fig:AUC_comparison} indicates that the best overall configuration is Gemini\textendash 3.1\textendash Flash\textendash Lite + $\phimap_{\mathrm{text}}$ at AUC = 0.907, followed by Mistral\textendash Large + $\phimap_{\mathrm{text}}$ at AUC = 0.901 and Qwen\textendash 2.5\textendash 72B + $\phimap_{\mathrm{text}}$ at AUC = 0.880. The image\textendash statistics regime, in contrast, collapses to AUC in $[0.499, 0.580]$, with several models defaulting to a single class (F1 = 0.000) when the input contains no diagnostic signal: a clean negative\textendash control outcome.

The mean text-vs-image AUC gap remains large ($\Delta \approx 0.327$; per-model gaps: Gemini $+0.327$, Mistral $+0.369$, Qwen $+0.348$, Claude $+0.292$, GPT-4o-mini $+0.297$). Pairwise paired-bootstrap tests further show that stronger models significantly outperform weaker ones under matched ablations: under $\phimap_{\mathrm{all}}$, Mistral, Qwen, and Gemini each exceed GPT-4o-mini by $\Delta$AUC in $[0.132,\,0.144]$ with $q<0.01$ (Table~\ref{tab:pairwise_auc}). In contrast, pairwise differences among the top three models under $\phimap_{\mathrm{all}}$ remain within the bootstrap noise band and are not significant after FDR correction.

\subsubsection{Gemini's Anomaly on Numeric Input}
Under the radiology-numeric regime ($\phimap_{\mathrm{rad}}$, structured attributes without language rendering), Gemini-3.1-Flash-Lite still exhibits a clear separation from the other models (Table~\ref{tab:pairwise_auc}). A plausible interpretation is that Gemini is comparatively better at extracting signal directly from serialized numeric fields. While the mechanism cannot be verified externally, the empirical gap is robust in the updated full-cohort analysis. Substantively, this suggests that Pillar~P1 (language as modality) remains highly effective overall, but is not strictly necessary for all vendor/model families.

\begin{table}[tb]
\centering
\small
\setlength{\tabcolsep}{3pt}
\caption{Strong--versus--weak-model gaps after FDR correction. Representative pairwise AUC comparisons on LIDC\textendash IDRI using paired bootstrap with Benjamini--Hochberg FDR correction. $\Delta$ denotes AUC difference (Model~A $-$ Model~B). Significance markers: $^{**}{\,}q<0.01$, $^{*}{\,}q<0.05$.}
\label{tab:pairwise_auc}
\begin{tabular}{@{}l c c c@{}}
\toprule
\textbf{A vs.\ B} & $\Delta$\textbf{AUC} & \textbf{95\% CI} & $\boldsymbol{q}$ \\
\midrule
\multicolumn{4}{@{}l}{\emph{All modalities} ($\phimap_{\mathrm{all}}$)} \\
Mistral vs.\ GPT & $+0.062$ & $[0.038, 0.090]$ & $^{**}$ \\
Qwen vs.\ GPT & $+0.101$ & $[0.075, 0.132]$ & $^{**}$ \\
Gemini vs.\ GPT & $+0.123$ & $[0.096, 0.148]$ & $^{**}$ \\
Mistral vs.\ Qwen & $-0.039$ & $[-0.055, -0.024]$ & $^{**}$ \\
\midrule
\multicolumn{4}{@{}l}{\emph{Numeric attributes only} ($\phimap_{\mathrm{rad}}$)} \\
Gemini vs.\ Mistral & $+0.240$ & $[0.210, 0.268]$ & $^{**}$ \\
Gemini vs.\ GPT & $+0.113$ & $[0.079, 0.148]$ & $^{**}$ \\
Gemini vs.\ Qwen & $+0.102$ & $[0.078, 0.129]$ & $^{**}$ \\
\midrule
\multicolumn{4}{@{}l}{\emph{Image statistics only} ($\phimap_{\mathrm{img}}$)} \\
Gemini vs.\ GPT & $+0.080$ & $[0.042, 0.114]$ & $^{**}$ \\
Mistral vs.\ GPT & $+0.033$ & $[-0.011, 0.069]$ & 0.22 \\
\bottomrule
\end{tabular}
\end{table}

\subsection{Operating Threshold Selection and Empirical Insights}
\label{res:operating_threshold}

\begin{figure}[tb]
    \centering
    \includegraphics[width=0.95\linewidth]{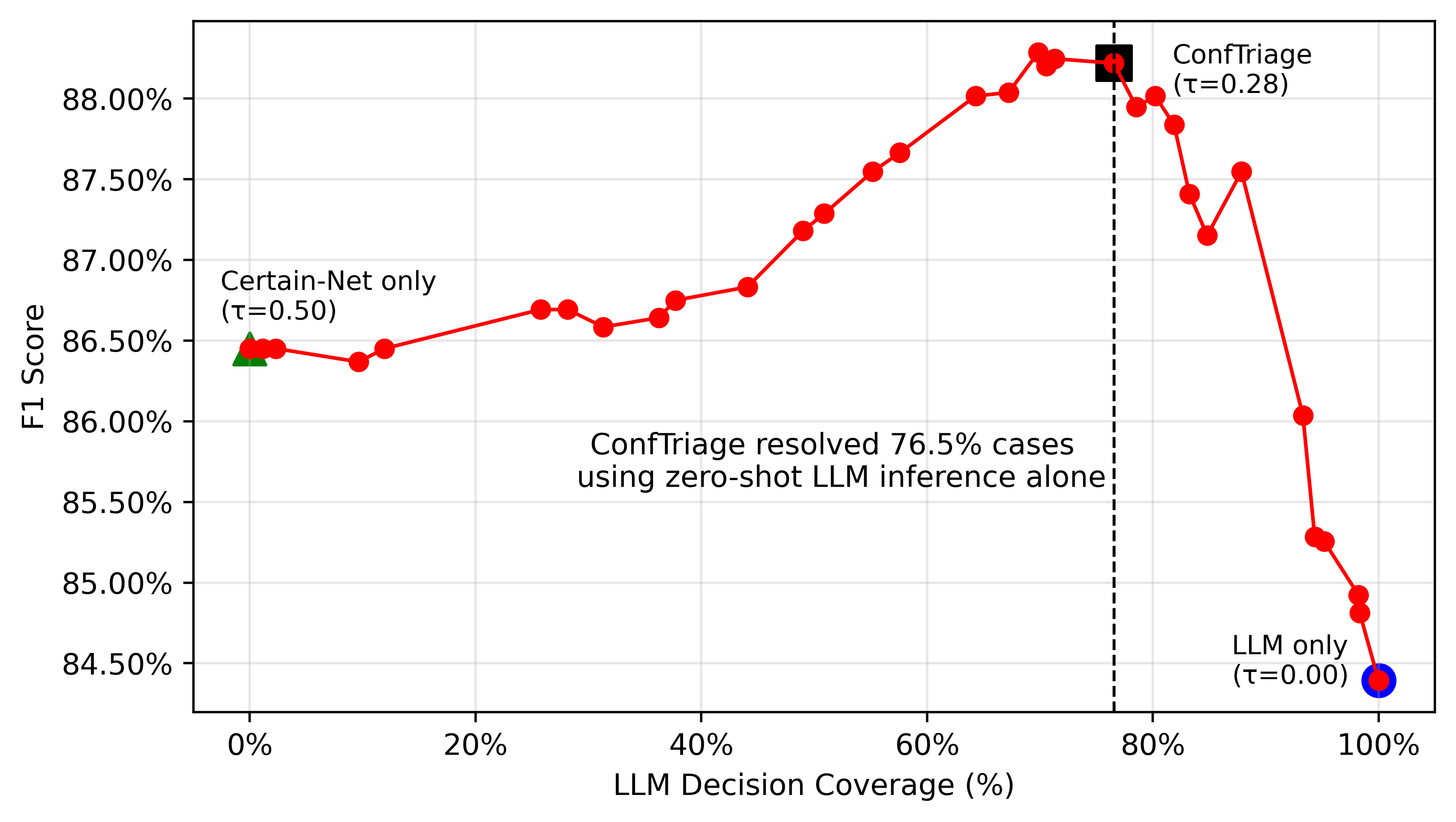}
    \caption{Elbow-point selection of the operating threshold ($\tau=0.28$), balancing predictive performance and LLM decision coverage.}
    \label{fig:conftriage_threshold_selection}
\end{figure}

The deferral threshold $\tau$ controls the operating tradeoff of ConfTriage. Lower values of $\tau$ result in reduced performance, whereas higher values of $\tau$ reduce coverage and increase reliance on the image-based DL specialist backstop. To identify a practically useful operating point, we aim to determine the minimum $\tau$ that provides high coverage while ensuring that performance gains obtained from additional referrals become marginal beyond that point. Therefore, we evaluated ConfTriage over threshold values ranging from 0 to 0.5 with a step size of 0.01. Fig.~\ref{fig:conftriage_threshold_selection} presents the relationship between LLM decision coverage and F1 score. Using the elbow method, the operating threshold was determined as $\tau = 0.28$. At $\tau = 0.28$, ConfTriage retained 76.5\% of cases at the LLM stage while referring only the remaining 23.5\% of uncertain cases to the specialist DL backstop. Consequently, more than three quarters of all diagnostic decisions were completed using zero-shot LLM inference alone, without requiring invocation of a task-specific image model.

Fig.~\ref{fig:conftriage_threshold_selection} also highlights that relying exclusively on the LLM ($\tau = 0.0$) or exclusively on the DL backstop ($\tau = 0.5$) is suboptimal, as both configurations result in lower F1 scores. In contrast, ConfTriage achieves superior performance through selective referral, leveraging the complementary strengths of the LLM and the DL backstop while maintaining high LLM coverage.



\subsection{Empirical Verification of Theorem~\ref{thm:triage}}

Theorem~\ref{thm:triage} was evaluated independently on each held--out cross--validation fold using the common operating threshold $\tau=0.28$. Each fold contained $n=191$ test samples, yielding a Hoeffding slack of
\[
\sqrt{\frac{\log(20)}{2\times191}}
=0.0886.
\]
Across the five folds, the empirical combined error was
$13.1\%\pm2.3\%$, while the resulting certified upper bound was
$21.9\%\pm2.3\%$. The worst fold attained an empirical error of
$16.8\%$, corresponding to a certified upper bound of $25.6\%$.
Thus, with probability at least $95\%$, the population error of ConfTriage on every held--out fold is bounded by its corresponding certificate, with the worst--case deployment certificate equal to $25.6\%$.

\subsection{Empirical Verification of Theorem~\ref{thm:oracle}}
\label{emperical_oracle}

\begin{table}[tb]
\centering
\caption{Empirical verification of Theorem~\ref{thm:oracle} through calibration-aware routing. Calibration substantially reduces ECE while increasing LLM decision coverage and slightly improving overall ConfTriage performance.}
\label{tab:theorem3_verification}
\setlength{\tabcolsep}{4pt}
\footnotesize
\begin{tabular}{lcccc}
\toprule
\textbf{Probability source} & \textbf{ECE\,$\downarrow$} & \textbf{Brier\,$\downarrow$} & \textbf{Cov.\,$\uparrow$} & \textbf{F1\,$\uparrow$}\\
\midrule
Raw verbalized confidence & 0.0845 & 0.1239 & 65.45\% & 0.8781\\
Platt-scaled confidence   & 0.0421 & 0.1230 & 76.54\% & 0.8822\\
\midrule
Improvement & $-50.2\%$ & $-0.7\%$ & $+11.1$~pp & $+0.46\%$\\
\bottomrule
\end{tabular}

\vspace{2pt}
{\scriptsize \emph{Note:} ECE, Brier, and F1 improvements are relative changes; the coverage (Cov.) improvement is an absolute change in percentage points (pp).}
\end{table}

To empirically assess the implications of Theorem~\ref{thm:oracle}, we compared ConfTriage using raw verbalized confidence scores against the proposed Platt-calibrated probabilities. The results are summarized in Table~\ref{tab:theorem3_verification}. Calibration reduced the expected calibration error (ECE) from 0.0845 to 0.0421, corresponding to an approximately 50\% relative reduction, while slightly improving the Brier score from 0.1239 to 0.1230. At the selected operating threshold ($\tau=0.28$), calibration also increased LLM decision coverage from 65.45\% to 76.54\% and improved the overall F1 score from 0.8781 to 0.8822. These results are consistent with Theorem~\ref{thm:oracle}, demonstrating that improved probability calibration leads to more effective routing decisions, allowing ConfTriage to defer fewer cases to the specialist backstop while maintaining slightly higher predictive performance.

\subsection{Comparison Against Reader Variability Using LOO Consensus}
\label{sec:reader_variability}

To assess whether ConfTriage operates within the range of expert-reader variability, we compared its predictions against the distribution of reader annotations available in LIDC\textendash IDRI. The LIDC\textendash IDRI dataset contains multiple independent radiologist assessments for each nodule; however, reader identities are not globally consistent across the dataset. In particular, the radiologist indexed as reader~1 for one nodule is not necessarily the same individual as reader~1 for another nodule. Consequently, reader-specific performance cannot be estimated across the entire dataset, and conventional inter-radiologist analyses assuming fixed reader identities are not applicable. To address this limitation, we adopted a LOO consensus protocol (described in Section~\ref{sec:LOO_protocol}), in which each reader annotation was compared against a consensus label derived from the remaining readers associated with the same nodule.

Table~\ref{tab:reader_comparison} summarizes the resulting comparison. Individual readers are evaluated against the LOO consensus using sensitivity, specificity, accuracy, AUC, and Cohen's $\kappa$. ConfTriage is evaluated at three operating thresholds ($t=0.3$, $0.5$, and $0.7$) to illustrate the sensitivity--specificity trade-off. Unlike AUC, which summarizes performance across all possible thresholds, varying $t$ allows direct comparison between model operating points and reader operating points. Fig.~\ref{fig:reader_roc} visualizes the same comparison in ROC space. Continuous curves correspond to ConfTriage, LLM, and Certain-Net, while points $R1$--$R4$ denote the operating points obtained by comparing each reader against the consensus of the remaining readers. Each reader therefore contributes a single sensitivity--specificity pair rather than a full ROC curve. Because the LOO protocol generates one comparison instance per reader annotation, model performance is evaluated on reader--nodule instances rather than unique nodules. Consequently, the reader-specific analyses involve 611, 587, 494, and 357 valid comparison instances for R1--R4, respectively, whereas ConfTriage, the LLM, and Certain-Net are evaluated across all 2049 valid reader--nodule instances generated by the LOO procedure.

\begin{table}[tb]
\centering
\caption{Comparison of individual readers, Certain-Net, LLM, and ConfTriage under the LOO consensus protocol.}
\label{tab:reader_comparison}
\resizebox{0.5\textwidth}{!}{
\begin{tabular}{lccccccc}
\toprule
\textbf{System} & $t$ & $n$ & \textbf{Sen.} & \textbf{Spe.} & \textbf{Acc.} & \textbf{AUC} & \textbf{Cohen's $\kappa$} \\

\midrule
R1 & -- & 611 & 0.820 & 0.891 & 0.851 & 0.910 & 0.701 \\
R2 & -- & 587 & 0.879 & 0.883 & 0.881 & 0.922 & 0.759 \\
R3 & -- & 494 & 0.908 & 0.853 & 0.885 & 0.918 & 0.764 \\
R4 & -- & 357 & 0.608 & 0.896 & 0.709 & 0.874 & 0.440 \\
\midrule
ConfTriage & 0.3 & 2049 & 0.967 & 0.801 & 0.898 & 0.952 & 0.785 \\
ConfTriage & 0.5 & 2049 & 0.942 & 0.878 & 0.915 & 0.952 & 0.825 \\
ConfTriage & 0.7 & 2049 & 0.884 & 0.914 & 0.897 & 0.952 & 0.790 \\
\midrule
Certain-Net & -- & 2049 & -- & -- & -- & 0.949 & -- \\
LLM & -- & 2049 & -- & -- & -- & 0.946 & -- \\
\bottomrule
\end{tabular}
}
\end{table}


\begin{figure}[tb]
\centering
\includegraphics[width=\linewidth]{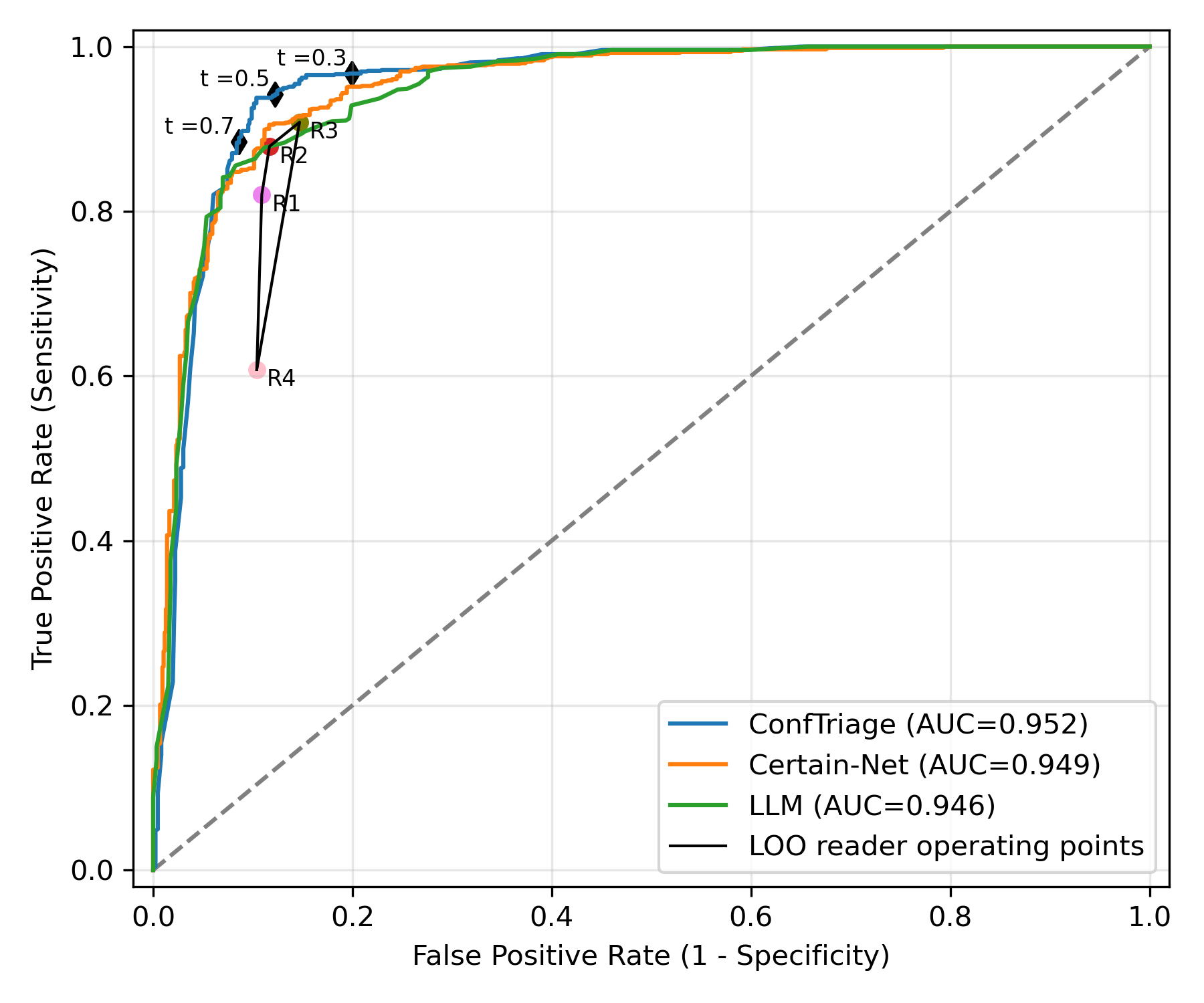}
\caption{ROC analysis using LOO consensus labels shows that ConfTriage achieves agreement with consensus that lies at the upper end of the range observed among individual reader operating points while slightly outperforming the specialist DL backstop (Certain-Net) and the standalone LLM prediction.}
\label{fig:reader_roc}
\end{figure}

Several observations emerge from this analysis. First, substantial variability exists among the reader operating points. Readers~R1--R3 exhibit relatively similar performance, achieving AUC values between 0.910 and 0.922 with Cohen's $\kappa$ values ranging from 0.701 to 0.764. In contrast, reader~R4 shows noticeably lower agreement with the remaining readers (AUC = 0.874, $\kappa=0.440$), highlighting the degree of observer variability present in the dataset.

Second, ConfTriage achieves an AUC of 0.952, which is higher than the AUC values observed for the individual reader operating points under the LOO-consensus framework and slightly outperforms both LLM (AUC = 0.946) and Certain-Net (AUC = 0.949). At the default threshold $t=0.5$, ConfTriage attains a sensitivity of 0.942, specificity of 0.878, accuracy of 0.915, and Cohen's $\kappa$ of 0.825, which are higher than the corresponding values observed for the individual reader operating points under the LOO-consensus framework.

Third, the ROC curves of ConfTriage, LLM, and Certain-Net occupy a performance region above the individual reader operating points, indicating that the proposed framework preserves the discriminative information available in both the specialist model and the underlying LLM while occupying a region of ROC space that is above the operating points observed for the individual readers.



Overall, this analysis quantifies the performance of ConfTriage relative to the variability inherent in the available expert-reader annotations. Under this framework, ConfTriage demonstrates agreement with consensus that is comparable to that observed for the individual reader operating points in the dataset while providing calibrated probabilistic outputs suitable for downstream triage and selective deferral.

\subsection{Ablation Studies}
\subsubsection{Synthetic Corruption Controls}
Three controls test the hypothesis that the LLM is exploiting genuine semantic content of the description rather than superficial language priors. 
(a)~Attributes negated by mapping each rating to its semantic opposite (``smooth'' $\to$ ``spiculated,'' etc.). 
(b)~Attributes paraphrased with neutral language while preserving meaning (``rounded'' instead of ``spherical'').
(c)~Attributes shuffled across nodules within the malignant class, preserving the marginal distribution of words while breaking the input-to-label relationship. The results we found accross 5 LLMs are presented in Table \ref{tab:text_corruption_ablation}.
Across models, performance under corruption generally decreases relative to the text-only condition, though the magnitude of degradation varies by corruption type and model.

Results in Table \ref{tab:text_corruption_ablation} show that the paraphrased descriptions typically preserve most of the original performance, indicating that models are not overly sensitive to surface-level lexical variation when semantic content is retained. 
In contrast, negated attributes lead to more substantial drops in AUC for several models (Claude, Gemini, GPT, and Qwen), suggesting that models are responsive to semantic polarity and that label-relevant features are being used. The shuffled control produces mixed effects. Overall, these results support the hypothesis that LLMs leverage meaningful semantic information contained in the radiological descriptions rather than relying solely on superficial lexical cues.

\begin{table}[tb]
\centering
\small
\caption{Text corruption ablation across models. AUC is reported with bootstrap 95\% confidence intervals. The text-only condition serves as the reference.}
\label{tab:text_corruption_ablation}
\setlength{\tabcolsep}{4pt}
\begin{tabular}{llc}
\toprule
\textbf{Model} & \textbf{Ablation} & \textbf{AUC [95\% CI]} \\
\midrule

Claude & Negated attrs     & 0.701 [0.631, 0.768] \\
                 & Paraphrase        & 0.772 [0.711, 0.829] \\
                 & Shuffle malignant & 0.711 [0.635, 0.787] \\
                 & Text only         & 0.772 [0.700, 0.841] \\

\midrule
Gemini & Negated attrs     & 0.820 [0.760, 0.876] \\
                      & Paraphrase        & 0.911 [0.867, 0.945] \\
                      & Shuffle malignant & 0.863 [0.809, 0.909] \\
                      & Text only         & 0.909 [0.865, 0.942] \\

\midrule
Mistral & Negated attrs     & 0.870 [0.817, 0.914] \\
               & Paraphrase        & 0.897 [0.844, 0.938] \\
               & Shuffle malignant & 0.891 [0.841, 0.932] \\
               & Text only         & 0.907 [0.861, 0.943] \\

\midrule
GPT & Negated attrs     & 0.615 [0.538, 0.688] \\
             & Paraphrase        & 0.762 [0.701, 0.827] \\
             & Shuffle malignant & 0.758 [0.704, 0.817] \\
             & Text only         & 0.790 [0.730, 0.851] \\

\midrule
Qwen & Negated attrs     & 0.756 [0.685, 0.813] \\
              & Paraphrase        & 0.828 [0.767, 0.878] \\
              & Shuffle malignant & 0.624 [0.580, 0.670] \\
              & Text only         & 0.870 [0.815, 0.918] \\

\bottomrule
\end{tabular}
\end{table}

\subsubsection{Description\textendash Source Ablation}
\label{app:descsource}

To investigate whether the LLM's predictive performance depends on the specific lexical realization of the input description or on the underlying radiological information itself, we compared the deterministic template (textual description in the dataset) 
against a paraphrase generated by Llama~3.1 8B Instruct and an adversarial paraphrase that preserves all attribute information while replacing potentially salient trigger terms. As shown in Table~\ref{tab:description_source_ablation}, the deterministic template achieved the highest performance (AUC = 0.907), while both the adversarial and Llama-generated paraphrase resulted in a modest AUC decrease. These findings suggest that the framework is relatively robust to changes in wording and does not rely heavily on specific trigger terms, while also indicating that the quality and structure of the generated descriptions influence predictive performance. 


\begin{table}[tb]
\centering
\caption{Description-source robustness analysis. All rows use the same target model and inference setting (single fixed run, temperature \(=0\)); only lexical realization of identical structured radiological attributes is varied (deterministic template, Llama paraphrase, adversarial paraphrase).}

\label{tab:description_source_ablation}
\begin{tabular}{lc}
\toprule
\textbf{Description source} & \textbf{AUC [95\% CI]} \\
\midrule
Deterministic template & 0.907 \,[0.861,\,0.943]  \\
Adversarial paraphrase & 0.867 \,[0.845,\,0.888] \\
Llama paraphrase       & 0.874 \,[0.851,\,0.894] \\
\bottomrule
\end{tabular}
\end{table}

\subsection{Prompt and Temperature Sensitivity}
\label{app:prompt}

Five prompt templates are tested, with results reported in Fig. \ref{fig:prompt_template_temperature_heatmap}: (T1)~zero\textendash shot direct (``Is this nodule benign or malignant?''); (T2)~thoracic\textendash radiologist persona; (T3)~chain\textendash of\textendash thought \cite{wei2022chain}; (T4)~three\textendash shot in\textendash context with class\textendash balanced examples drawn from $\Dcalib$; (T5)~Fleischner Society 2017 guidelines preamble. Five temperatures $T \in \{0.0, 0.3, 0.5, 0.7, 1.0\}$ are run with five replicates each on the best (model, $\phimap_{\mathrm{text}}$) configuration. Overall performance was strong and stable (AUC range $\approx 0.884$--$0.916$), with low replicate variability (typical $\pm 0.001$--$\pm 0.004$). The thoracic-radiologist persona template (T2) performed best overall, peaking at $T=0.5$ (AUC $\approx 0.916$). Temperature had a secondary effect relative to template choice, with mild degradation at higher temperature for some templates. Prompts are shared in Appendix \ref{app:prompt_ablation_examples}.

\begin{figure}[tb]
    \centering
    \includegraphics[width=\linewidth]{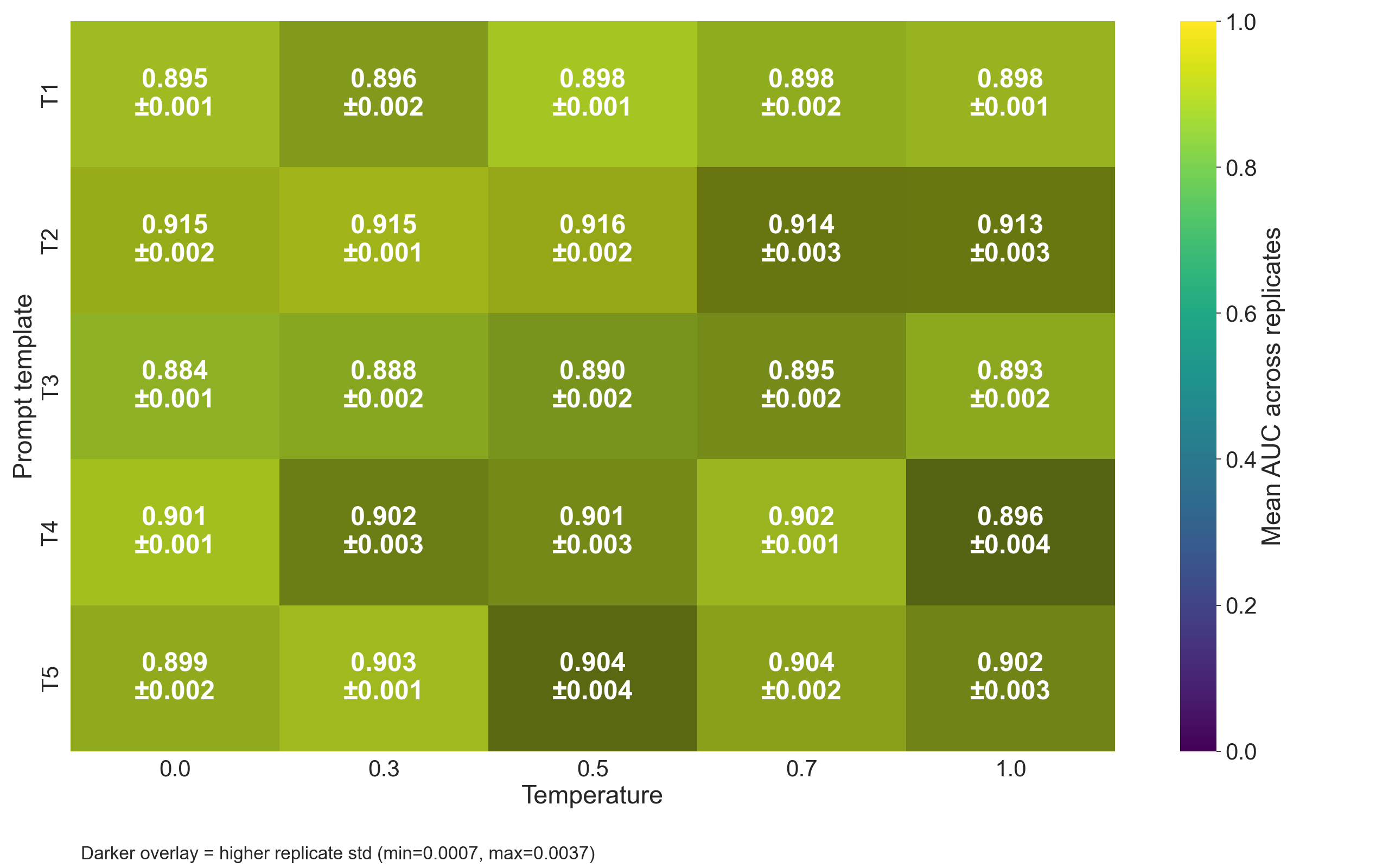}
    \caption{ConfTriage LLM prompt and temperature sensitivity analysis using text-only prompt ablation. Cell values report mean AUC ($\pm$SD) over five replicates, while darker cell shading indicates higher replicate variability.}

    \label{fig:prompt_template_temperature_heatmap}
\end{figure}

\subsection{Near\textendash Leak Attribute Removal}
\label{app:nearleak}

To investigate whether the LLM's predictive performance is primarily driven by a small number of highly malignancy-correlated attributes or by broader radiological information, we conducted an attribute-removal ablation study by excluding the LIDC\textendash IDRI spiculation and margin attributes individually and jointly from the input descriptions. As shown in Table~\ref{tab:nearleak-auc-drops}, removing \emph{spiculation} did \emph{not} reduce performance; instead, AUC increased from 0.907 (baseline) to 0.912 (\(\Delta\)AUC \(=+0.005\)). Removing \emph{margin} produced a near-null change (AUC 0.900; \(\Delta\)AUC \(=-0.007\)), and joint removal of both cues also yielded only a minimal shift (AUC 0.902; \(\Delta\)AUC \(=+0.005\)). Overall, these results do not support a dominant dependence on either spiculation or margin tokens alone; performance appears to be sustained by broader information in the description rather than a single near-leak attribute.

\begin{table}[tb]
\centering
\small

\caption{Near-leak attribute-removal ablation on deterministic attribute-based descriptions.}
\label{tab:nearleak-auc-drops}
\begin{tabular}{lcc}
\toprule
\textbf{Setting} & \textbf{AUC (95\% CI)} & \textbf{$\Delta$ vs baseline} \\
\midrule
baseline\_template & 0.907 (0.861, 0.943)  & -- \\
remove\_spic. & 0.912 (0.893, 0.932) & +0.005 \\
remove\_margin & 0.900 (0.880, 0.920)  & -0.007 \\
remove\_spic.\_margin & 0.902 (0.882, 0.919) & -0.005 \\
\bottomrule
\end{tabular}
\end{table}

\section{Discussion}
\label{sec:discussion}

\subsection{Operational Implication} The headline finding is not that LLMs ``beat'' specialist DL; the finding is that, when a structured radiology workflow already produces nodule attributes, an LLM operating on a faithful language rendering of those attributes produces predictions of clinically meaningful quality at zero training cost and modest API cost. \method{} extends this from a measurement to a methodological framework: the calibrated probability and the deferral threshold turn LLM confidence into a defensible routing decision, and Theorem~\ref{thm:triage} provides a finite\textendash sample guarantee on the resulting combined error. The specialist DL backstop's operating curve, together with the uncertainty distributions in Fig.~\ref{fig:uncertainty_distribution}, establishes that the backstop component does emit a calibrated rejection signal, which is the empirical precondition that the combined \method{} operating curve relies on. We position the present paper as a methodological contribution validated on public benchmarks; prospective clinical validation is the natural follow\textendash on study and is enabled by the public release.

\begin{figure}[tb]
\centering
\includegraphics[width=0.65\columnwidth]{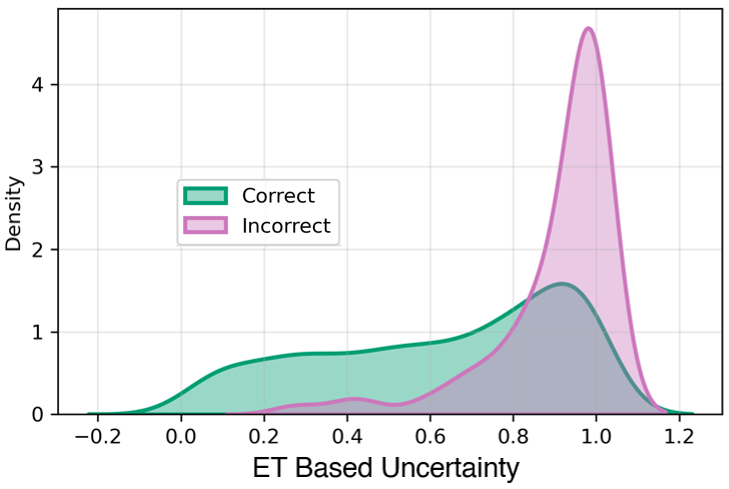}
\caption{Uncertainty distribution of the specialist DL backstop. Incorrect predictions are concentrated at higher entropy (ET) values, providing empirical justification for routing low-confidence cases through the backstop's deferral mechanism.}
\label{fig:uncertainty_distribution}
\end{figure}

\subsection{Language vs Image Information}
\subsubsection{Why Language Dominates} Two non\textendash exclusive explanations for the empirical discovery in Section~\ref{sec:res_ablation} are consistent with our results. First, generalist LLMs are trained on enormous corpora of medical text in which malignancy correlates with specific terms (\emph{spiculated}, \emph{lobulated}, \emph{ground\textendash glass}); a faithful language rendering of attributes therefore recovers a strong prior. Second, the structured attributes themselves carry most of the diagnostic signal in LIDC\textendash IDRI, and language is simply the most efficient packaging the LLM can consume. The synthetic\textendash corruption controls in Section~\ref{sec:res_ablation} are designed to disambiguate these two: if (a) negation collapses AUC and (b) paraphrase preserves it, the LLM is genuinely consuming semantic content, not just exploiting trigger terms.


\subsubsection{Why Image\textendash Statistics Inputs Fail} Low\textendash level histogram and GLCM statistics are not a representation that generalist LLMs were trained to interpret in a clinical sense. The result is consistent with a broader theme in the literature: GPT\textendash 4V and similar models do well on modality and anatomy recognition but poorly on lesion\textendash level pathology identification \cite{brin2025assessing}. Our paper does not contradict the value of vision\textendash language pipelines that align CT features to language \cite{shaukat2024lung, zhuang2025vision}; it shows instead that, when structured language input is available, generalist LLMs achieve much of the benefit without the alignment training.

\subsection{What the Theory Adds} The two theorems in Section~\ref{sec:theory} are not deep mathematical contributions; they are honest applications of classical tools to the specific framework. Their value is operational. Theorem~\ref{thm:triage} converts vague claims (``\method{} seems to work'') into auditable statements (``the combined error is at most $25.6\%$ with probability $0.95$'') that a clinical safety officer can read and verify. Theorem~\ref{thm:oracle} is the more conceptually interesting result: it shows that calibration (which is observable through ECE) controls the gap to the Bayes\textendash optimal deferral classifier. The empirical results support this prediction: Platt calibration reduced ECE by approximately 50\%, increased LLM decision coverage from 65.5\% to 76.5\%. Without Theorem~\ref{thm:oracle}, the calibration step in \method{} would be a heuristic aimed at producing more reliable confidence estimates; with it, calibration becomes a directly measurable mechanism for improving routing decisions and reducing triage suboptimality.

\subsection{Limitations}
\label{sec:limitations}
\begin{enumerate}[leftmargin=*, itemsep=1pt, topsep=2pt]
\item LIDC\textendash IDRI label subjectivity: The malignancy labels in LIDC\textendash IDRI are based on radiologist assessments rather than pathology\textendash confirmed diagnoses. Future work should validate the proposed framework on pathology\textendash proven datasets to assess its performance against definitive clinical ground truth.

\item No prospective clinical validation: This study presents a methodological framework evaluated on public benchmarks. We do not claim prospective deployment performance and did not conduct a recruited reader study. While the LIDC\textendash IDRI radiologist annotations provide a benchmark human reference, prospective external validation, multi\textendash institution reader studies, workflow\textendash integrated evaluation, and local calibration audits remain necessary before clinical deployment.

\item Limited multimodal public datasets: Public datasets providing both CT images and expert textual descriptions for lung nodule malignancy assessment are scarce. Consequently, this study is limited to LIDC\textendash IDRI, where structured textual descriptions were generated.

\end{enumerate}

\subsection{Design Recommendations for AI Developers} Three concrete recommendations follow from the present results, each tied to a specific finding rather than to the framework as a whole.

\begin{tcolorbox}[
  enhanced, breakable,
  colback=conftriage-lightgreen!22,
  colframe=conftriage-teal,
  boxrule=0.6pt, arc=2.5pt,
  left=8pt, right=8pt, top=5pt, bottom=7pt,
  before skip=6pt, after skip=6pt,
  title={Design Recommendations for AI Developers},
  coltitle=white, colbacktitle=conftriage-teal,
  fonttitle=\bfseries\small, toptitle=2pt, bottomtitle=2pt
]
\rec{R1}{Use language input, not raw\textendash image statistics, when invoking a generalist LLM as a triage classifier.}{%
Our seven\textendash way ablation (Table~\ref{tab:main_results}) shows that across five vendors, low\textendash level image statistics deliver near\textendash chance discrimination, while a deterministic natural\textendash language rendering of standard radiology attributes routinely yields AUC $\geq 0.79$ and reaches AUC = 0.907 for the strongest model. The evidence does not support feeding raw histogram or GLCM statistics as a substitute for a language description.}

\recsep

\rec{R2}{When deploying, calibrate first, then threshold.}{%
Theorem~\ref{thm:oracle} ties \method{}'s excess risk to the $L^1$ calibration error of the LLM probability. In practice this means: hold out a calibration fold, fit Platt scaling on the verbalized logit, then sweep the deferral threshold under a sensitivity\textendash floor constraint (we use $\Sens \geq 0.95$). Skipping the calibration step turns the deferral threshold into a heuristic with no formal guarantee.}

\recsep

\rec{R3}{Pair the LLM with a backstop that itself emits a calibrated rejection signal.}{%
The Certain\textendash Net backstop's MC\textendash dropout uncertainty separates correct from incorrect predictions; this is the empirical precondition that Theorem~\ref{thm:triage}'s certificate relies on. A backstop without an internal calibration audit (e.g., a single deterministic CNN with no uncertainty estimate) does not provide the same operational guarantee.}
\end{tcolorbox}

\subsection{Broader Impact} A triage system that defers hard cases to a specialist DL backstop is operationally appealing because it makes the failure mode explicit (low confidence $\to$ deferral) rather than hidden. This contrasts with a single\textendash model classifier whose miscalibration is invisible at use time. We caution that linguistic outputs from LLMs can be persuasive even when wrong, and any clinical deployment must include a verbalized\textendash confidence audit and a deferral path back to a human reader.


\section{Conclusion}
\label{sec:conclusion}

We introduced \method{}, a confidence-calibrated triage framework that combines a generalist LLM with a selective specialist DL backstop for pulmonary nodule malignancy prediction. \method{} achieved an F1 score of 0.882 while resolving 76.5\% of cases through zero-shot LLM inference alone. These findings suggest that structured radiological information expressed in natural language can effectively support LLM-based clinical triage. Ultimately, \method{} demonstrates that confidence-aware selective referral provides an effective mechanism for leveraging the complementary strengths of generalist LLMs and specialist AI models in clinical decision support.

\bibliographystyle{IEEEtran}
\bibliography{references}


\appendices

\section{Example Prompts Used in Prompt Ablation}
\label{app:prompt_ablation_examples}
We provide the prompt for each of the five prompt-template types (T1--T5) used
in the prompt-ablation study. In all cases, the same input description placeholder
\texttt{[NODULE\_DESCRIPTION]} was used.

\begin{promptbox}{Shared suffix appended in all templates (T1--T5)}
Nodule description:
[NODULE_DESCRIPTION]
Question: Is this nodule benign or malignant?
Return strict JSON only with keys: label, probability.
label must be 'Y' for malignant or 'N' for benign.
probability must be a float in [0, 1] for malignant probability.
\end{promptbox}

\begin{promptbox}{T1: (zero\_shot\_direct) prefix}
You are evaluating a pulmonary nodule.
Give a direct binary malignancy decision.
\end{promptbox}

\begin{promptbox}{T2: (thoracic\_radiologist\_persona) prefix}
You are a board-certified thoracic radiologist with expertise
in lung nodule risk assessment.
Use a clinically grounded interpretation of the nodule description.
\end{promptbox}

\begin{promptbox}{T3: (chain\_of\_thought) prefix}
Reason through the malignancy evidence step by step internally.
Do not reveal chain-of-thought. Output only the final JSON answer.
\end{promptbox}

\begin{promptbox}{T4: (three\_shot\_dcalib) prefix (default \texttt{n\_shots=3})}
Use these class-balanced in-context examples from Dcalib before answering.
Calibration examples (Dcalib):
Example 1:
Description: [DCALIB_EXAMPLE_1_TEXT]
Answer: {"label": "N", "probability": 0.10}
Example 2:
Description: [DCALIB_EXAMPLE_2_TEXT]
Answer: {"label": "Y", "probability": 0.90}
Example 3:
Description: [DCALIB_EXAMPLE_3_TEXT]
Answer: {"label": "N", "probability": 0.10}
\end{promptbox}

\begin{promptbox}{T5: (fleischner\_2017\_preamble) prefix}
Fleischner Society 2017 style preamble:
- Consider suspicious morphology and contextual risk from the reported descriptors.
- Prioritize features suggestive of malignancy such as irregular or spiculated patterns.
- Use conservative probabilistic judgment when descriptors are mixed.
\end{promptbox}
\end{document}